\documentclass{article} 
\usepackage{iclr2026_conference,times}

\usepackage{enumitem}
\usepackage{caption}
\usepackage{fancybox}

\usepackage{algpseudocode}

\usepackage[utf8]{inputenc} 
\usepackage[T1]{fontenc}    
\usepackage{hyperref}       
\usepackage{url}            
\usepackage{booktabs}       
\usepackage{amsfonts}       
\usepackage{nicefrac}       
\usepackage{microtype}      
\usepackage{xcolor}         
\definecolor{citecolor}{HTML}{0071BC}
\definecolor{linkcolor}{HTML}{D32F2F}
\definecolor{cellcolor}{HTML}{E3F2FD}
\definecolor{red}{HTML}{D32F2F}
\definecolor{magenta}{HTML}{D81B60}

\usepackage{amsmath}
\usepackage{amssymb}
\usepackage{mathtools}
\usepackage{amsthm}

\usepackage[capitalize,noabbrev]{cleveref}

\theoremstyle{plain}
\newtheorem{theorem}{Theorem}[section]
\newtheorem{proposition}[theorem]{Proposition}

\theoremstyle{definition}
\newtheorem{definition}[theorem]{Definition}

\theoremstyle{remark}
\newtheorem{remark}[theorem]{Remark}

\usepackage[textsize=tiny]{todonotes}
\usepackage{microtype}
\usepackage{graphicx}

\usepackage{subcaption}
\usepackage{booktabs} 

\usepackage{pgfplots}
\pgfplotsset{compat = newest}
\usepackage{multirow}

\usepackage{framed}

\usepackage{algorithm}
\usepackage{mathrsfs}
\usepackage{mathtools}
\usepackage{listings}
\usepackage{makecell}
\usepackage{colortbl}
\usepackage{color}
\usepackage{wrapfig}
\usepackage{cancel}
\usepackage{soul,xcolor}
\usepackage{pifont}
\usepackage{tikz}
\usetikzlibrary{shapes, positioning, arrows.meta, calc, decorations.pathmorphing, quotes}
\usepackage{todonotes}
\renewcommand{\cite}{\citep}

\usepackage[many]{tcolorbox}

\newtcolorbox{resultbox}[1][]{
    enhanced, 
    breakable, 
    colback=gray!5!white, 
    colframe=black!70!white, 
    boxrule=0.5pt, 
    arc=3pt, 
    left=6pt, right=6pt, top=6pt, bottom=6pt, 
    fonttitle=\bfseries, 
    coltitle=black, 
    attach boxed title to top center={yshift=-1mm}, 
    boxed title style={colback=white, colframe=black!70!white, boxrule=0.5pt}, 
    title=#1 
}

\usepackage{amsmath,amsfonts,bm}

\def\eqref#1{equation~\ref{#1}}

\def\1{\bm{1}}

\DeclareMathAlphabet{\mathsfit}{\encodingdefault}{\sfdefault}{m}{sl}
\SetMathAlphabet{\mathsfit}{bold}{\encodingdefault}{\sfdefault}{bx}{n}

\usepackage{hyperref}
\usepackage{url}

\title{From Intent to Evidence: A Categorical Approach for Structural Evaluation of Deep Research Agents}

\iclrfinalcopy

\author{
  Shuoling Liu\textsuperscript{1,2}, Zhiquan Tan\textsuperscript{2}, Kun Yi\textsuperscript{2}, Hui Wu\textsuperscript{2}, Yihan Li\textsuperscript{2}, Jiangpeng Yan\textsuperscript{2},\\ \textbf{ Liyuan Chen\textsuperscript{2}, Kai Chen\textsuperscript{1}, Qiang Yang\textsuperscript{3}}\\
  1. The Hong Kong University of Science and Technology\\
  2. E Fund Management Co., Ltd.\\
  3. The Hong Kong Polytechnic University\\
}

\begin{document}

\maketitle

\begin{abstract}
Deep Research Agents (DRAs) aim to answer complex questions by searching the web, checking evidence, and synthesizing conclusions across heterogeneous sources. We introduce a category-theoretic framework for evaluating and improving such agents. The framework treats deep research as a structured mapping from user intent to evidence-grounded conclusions, making retrieval traces, cross-source alignment, and final synthesis explicit. Guided by this view, we derive a mechanism-aware benchmark of 296 bilingual questions. The benchmark targets four structural skills central to real research: following multi-hop evidence chains, verifying claims across sources, re-ordering fragmented information, and rejecting unsupported assumptions. We evaluate 16 frontier systems with human verification and find that these structural tasks remain highly challenging: the best system reaches only 19.9\% average accuracy. The results show that strong agents can sometimes reorganize evidence and detect false premises, but still struggle with long-horizon synthesis and intersection-heavy verification. Beyond evaluation, the same theory also leads to practical system improvements. We instantiate theory-guided interventions such as tracked search, which preserves retrieval traces, and category tools, which add explicit verification and synthesis steps. These interventions yield measurable gains in API-based deep research systems. Our work therefore provides both a challenging benchmark and concrete design guidance for building more reliable research agents.\footnote{Our implementation will be available at \url{https://github.com/tzq1999/CDR}.}
\end{abstract}

\section{Introduction}

The recent leap in large language models (LLMs) has sparked the rapid development of autonomous agents, with Deep Research Agents (DRAs) emerging as a practical solution for complex tasks \citep{wu2025webwalkerbenchmarkingllmsweb}. Unlike early systems confined to static knowledge recall \citep{wei2025browsecomp,zhou2025browsecompzh,chen2025browsecompplus}, true deep research is inherently dynamic \citep{nie2024survey-financial}. It requires an agent to actively seek out, cross-validate, and stitch together information from diverse and frequently conflicting sources \citep{hu2025finsearchcomprealisticexpertlevelevaluation}. To provide genuine decision support, a DRA must go beyond simple retrieval; it needs to master the art of filtering noise, resolving contradictions, and synthesizing fragmented evidence into well-grounded judgments \citep{du2025deepresearchbenchcomprehensivebenchmark}.

However, a rigorous theoretical modeling of deep research agent behavior remains conspicuously absent from the literature. Without a formal mathematical foundation, current evaluation paradigms rely heavily on purely empirical and ad-hoc designs \citep{xu2025comprehensivesurveydeepresearch}. These heuristic benchmarks lack the systematism required to explicitly disentangle and measure an agent's concrete capabilities, often blurring the lines between genuine reasoning, basic retrieval, and simple memorization \citep{shi2025deepresearchsystematicsurvey}. Compounding this issue is the limited complexity of the evaluation tasks themselves; the queries designed in existing datasets rarely pose the level of difficulty needed to genuinely stress-test an agent's capacity for long-horizon synthesis and ambiguity resolution. This highlights a critical gap: the community lacks a theoretically grounded benchmark that provides both systematic capability verification and appropriately challenging tasks. Addressing this requires a fundamental shift from fragmented empirical testing toward a formal framework capable of rigorously modeling, and subsequently evaluating, the agent's entire research ability.

\begin{figure}[htbp]
    \centering
    \includegraphics[width=0.8\textwidth]{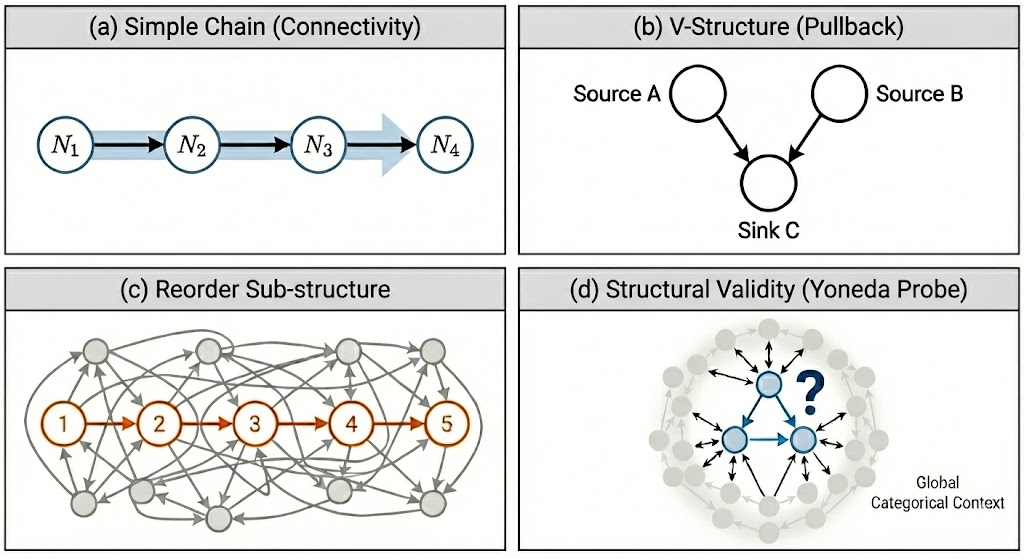}
    
    \caption{An illustrative picture of 4 categorical evaluation, definitions can be found in Section \ref{prelim}.}
    \label{fig:evalframe} 
\end{figure}

We formalize the behavior of the Deep Research Agent through the lens of category theory \citep{mac2013categories}, since the deep-research workflow is fundamentally a structural mapping: an abstract intent is mediated through a noisy external web and transformed into a logically constrained conclusion space. Rather than treating the pipeline as an ad-hoc collection of heuristics, this perspective models the workflow as a composition of structure-preserving maps (functors), which enables precise, testable statements about agent behaviours. Grounded in this categorical framework, we systematically construct a challenging benchmark with 296 bilingual questions that adversarially stress-tests agents along four interpretable axes: traversing sequential connectivity chains, verifying intersections within V-structure pullbacks, imposing topological ordering on retrieved substructures, and performing ontological falsification via the Yoneda Probe—so as to provide diagnostic, mechanism-aware evaluations. See Figure \ref{fig:evalframe} for an illustrative view.

We systematically benchmark $16$ frontier systems spanning reasoning models, search-augmented systems, closed-source deep research products, and our own theory-guided API-based variants using a strict, human-verified evaluation pipeline. The results establish a persistently low baseline---with the best system reaching only $19.9\%$ average performance on our benchmark---rigorously exposing the difficulty of formal structural stress-testing. Our evaluation reveals a sharp capability split. On one hand, advanced deep research pipelines demonstrate emerging strengths: they can successfully redefine dynamic topological re-ordering and exhibit robust ontological verification, sometimes matching pure reasoning models in their ability to falsify hallucinated premises. On the other hand, critical vulnerabilities remain unresolved. Systems almost universally collapse on multi-hop structural synthesis and exhibit absolute blind spots under specific mathematical constraints. Crucially, the large variance both across systems and across task families indicates a lingering reliance on brittle heuristics rather than a unified structural understanding.

Importantly, these findings are not only diagnostic. Because our framework factorizes deep research into retrieval structure and evidence-grounded synthesis, it also suggests concrete system interventions. We instantiate these ideas as two practical modules. First, \textit{tracked search} turns otherwise latent browsing jumps into explicit, auditable retrieval-graph edges, preserving the provenance needed by the Search Functor and making retrieval composition observable rather than implicit. Second, \textit{category tools} implement pullback-style verification and colimit-style synthesis, so that cross-source claims are merged only when they admit evidentiary pre-images and provenance-preserving aggregation, directly targeting failures of the Reasoning Functor. These are not merely conceptual embellishments: in controlled API-based systems, adding category tools on top of tracked search improves GPT-5 from $45.68\%$ to $47.53\%$ on DeepResearchBench and from $10.48\%$ to $13.29\%$ on our benchmark, with similar gains for Grok-4.1-fast. The theory therefore functions not only as an evaluation language, but also as a concrete design blueprint for stronger deep research agents.

This paper makes three contributions. First, we provide a category-theoretic formulation of deep research in terms of the Search Functor, the Reasoning Functor, pullbacks, colimits, and Yoneda-style falsification, thereby turning informal workflow intuitions into precise structural claims. Second, we instantiate this theory as a challenging benchmark and a human-verified evaluation protocol that expose concrete capability bottlenecks beyond broad report quality. Third, we translate the theory into concrete design modules: tracked search preserves retrieval traces, while categorical verification and synthesis operators improve evidence alignment and aggregation, yielding measurable gains in API-based implementations and in some cases narrowing the gap to closed-source deep research products. Together, these results underscore that while top-tier autonomous agents can increasingly unify search and reasoning, generalized mastery over complex structural topologies remains a formidable open challenge.

\section{Preliminary} \label{prelim}

To formalize the behavior of deep research agents, we briefly introduce several foundational concepts from category theory \citep{mac2013categories}. Our presentation balances mathematical precision with intuitive interpretations tailored for machine learning.

\begin{definition}[Category]
A category $\mathcal{C}$ consists of a class of objects $\mathrm{Ob}(\mathcal{C})$ and, for any pair of objects $X, Y \in \mathrm{Ob}(\mathcal{C})$, a set of morphisms $\mathrm{Hom}_{\mathcal{C}}(X,Y)$. These morphisms are equipped with a composition law: for $f \in \mathrm{Hom}_{\mathcal{C}}(X,Y)$ and $g \in \mathrm{Hom}_{\mathcal{C}}(Y,Z)$, their composition $g \circ f \in \mathrm{Hom}_{\mathcal{C}}(X,Z)$ is associative. Furthermore, every object $X$ has an identity morphism $\mathrm{id}_X \in \mathrm{Hom}_{\mathcal{C}}(X,X)$.
\end{definition}

\text{Intuition in ML:} A category abstracts a system of objects and their structure-preserving transformations. In our context, objects naturally correspond to representation spaces, datasets, or hypothesis classes, while morphisms represent the functions, network layers, or reasoning paths that map between them.

\begin{definition}[Functor]
Given categories $\mathcal{C}$ and $\mathcal{D}$, a functor $F: \mathcal{C} \to \mathcal{D}$ is a structure-preserving mapping that assigns each object $X \in \mathcal{C}$ to an object $F(X) \in \mathcal{D}$, and each morphism $f: X \to Y$ in $\mathcal{C}$ to a morphism $F(f): F(X) \to F(Y)$ in $\mathcal{D}$. A functor must preserve composition, i.e., $F(g \circ f) = F(g) \circ F(f)$, and identity, i.e., $F(\mathrm{id}_X) = \mathrm{id}_{F(X)}$.
\end{definition}

\text{Intuition in ML:} Functors provide a principled way to translate between different architectural paradigms or representation spaces. They ensure that the relational structure (the morphisms) of the original space is perfectly preserved during the mapping—analogous to how parameterization or embedding functions map discrete symbolic spaces into continuous latent spaces while preserving semantic relationships.

\begin{definition}[Pullback]
Given an object $Z$ and two morphisms $f: X \to Z$ and $g: Y \to Z$, their pullback is an object $P$ equipped with projection morphisms $\pi_1: P \to X$ and $\pi_2: P \to Y$ such that $f \circ \pi_1 = g \circ \pi_2$. Furthermore, $P$ satisfies a universal property: for any object $Q$ with morphisms $u: Q \to X$ and $v: Q \to Y$ satisfying $f \circ u = g \circ v$, there exists a unique morphism $h: Q \to P$ such that $\pi_1 \circ h = u$ and $\pi_2 \circ h = v$.
\end{definition}

\text{Intuition in ML:} The pullback computes the most general object whose images in $X$ and $Y$ are consistent under $f$ and $g$. In the category of sets, it is given by $P = \{ (x,y) \in X \times Y \mid f(x) = g(y) \}$. In machine learning, computing a pullback corresponds to enforcing equality constraints or performing multimodal fusion, where representations from different modalities ($X$ and $Y$) must align in a shared abstract semantic space ($Z$).

\begin{definition}[Limits and Colimits]
Let $J$ be an index category and $D: J \to \mathcal{C}$ be a diagram (a functor) in $\mathcal{C}$. The \emph{limit} of $D$, denoted $\lim D$, is an object $L \in \mathcal{C}$ equipped with projection morphisms $\pi_j: L \to D(j)$ that are compatible with all morphisms in $D$, satisfying a universal property analogous to the pullback. Dually, the \emph{colimit} of $D$, denoted $\operatorname{colim} D$, is the limit of the corresponding diagram in the opposite category $\mathcal{C}^{\mathrm{op}}$ (where all morphism directions are reversed).
\end{definition}

\text{Intuition in ML:} Limits and colimits generalize the notions of aggregation and composition. A limit (which encompasses products and pullbacks) aggregates information by finding the maximal jointly consistent representation that satisfies multiple structural constraints across the diagram. Conversely, a colimit (which encompasses disjoint unions and pushouts) formalizes the process of "gluing" or ensembling objects together—such as merging distinct state spaces or representations along a shared subspace.

\section{Categorical Formulation of Deep Research Agents}
\label{sec:formulation}

We formalize the end-to-end workflow of a Deep Research Agent (DRA) as a structural mapping from an abstract problem space to a structured conclusion space, fundamentally mediated by an external knowledge environment (the Web). While the empirical behaviors of large language models (LLMs) are often described using heuristic terminology, casting this transition into the language of category theory allows us to strictly define the requisite capabilities of an ideal DRA—such as task decomposition, information retrieval, and logical synthesis—through standard categorical constructs. More importantly, it provides a rigorous mathematical vocabulary to diagnose common agentic failure modes. More detailed proofs can be found in Appendix \ref{proof}. An illustrative figure can be found in Figure \ref{fig:overview}. 

\begin{figure}[htbp]
    \centering
    \includegraphics[width=1\textwidth]{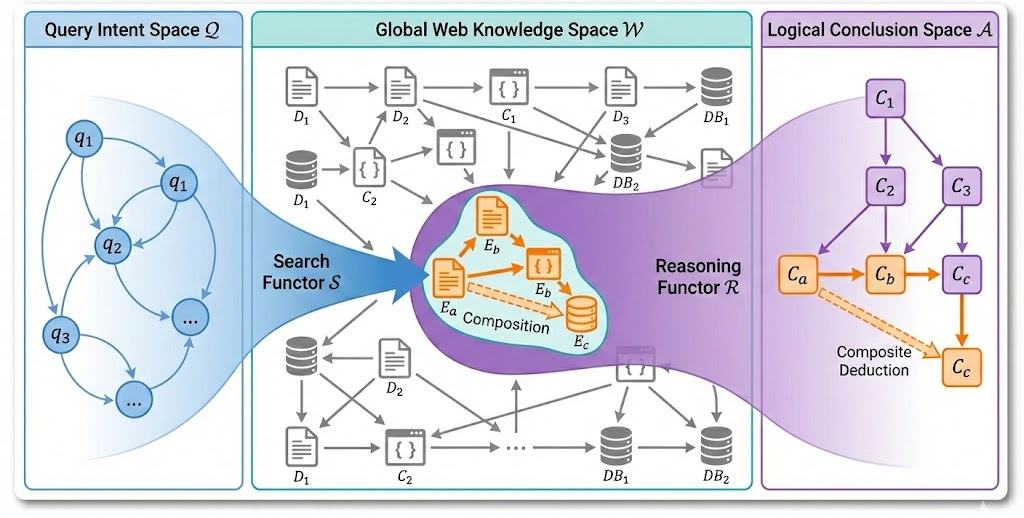}
    
    \caption{An illustrative picture of categorical view of the deep research agent workflow.}
    \label{fig:overview} 
\end{figure}

\subsection{Agent State Spaces as Categories}
Before defining the functors that govern the agent's dynamic behavior, we first establish the foundational categories that represent the distinct state spaces of the research process. For each conceptual phase, we specify its objects ($\mathrm{Ob}$) and morphisms ($\mathrm{Hom}$) to seamlessly bridge formal mathematics with intuitive AI operations.

\textbf{The Intent Category $\mathcal{Q}$ (Query Space).} 
The starting point of any research task resides in the Intent Space, which encapsulates the informational goals of the agent. Here, the objects $q \in \mathrm{Ob}(\mathcal{Q})$ represent all possible user queries, sub-problems, or atomic intent units. The morphisms $\mathrm{Hom}_{\mathcal{Q}}(q_1, q_2)$ represent \emph{task dependency} or \emph{refinement}. Specifically, a directed morphism $q_1 \xrightarrow{\text{dep}} q_2$ exists if resolving the sub-problem $q_2$ strictly depends on the resolution of $q_1$. In the context of autonomous agents, this category effectively models and formalizes the agent's internal task decomposition and planning graph.

\textbf{The Knowledge Category $\mathcal{W}$ (Web Space).} 
To execute these intents, the agent operates within the Knowledge Space, representing the vast, interconnected external environment. Its objects $w \in \mathrm{Ob}(\mathcal{W})$ are informational entities on the World Wide Web, such as web pages, raw documents, code snippets, or database records. The morphisms $\mathrm{Hom}_{\mathcal{W}}(w_1, w_2)$ denote \emph{structural or evidentiary linkages}. A morphism $w_1 \xrightarrow{\text{ref}} w_2$ exists if $w_1$ contains a hyperlink to $w_2$, or if the semantic content of $w_1$ objectively acts as a supporting premise or causal relation for $w_2$.

\textbf{The Retrieval Subcategory $\mathbf{Sub}(\mathcal{W})$ (Retrieved Context).} 
Because LLMs operate with finite context windows, the agent cannot process the entire Web simultaneously; instead, it interacts with a localized Retrieval Subspace. This is defined as a category formed by subgraphs (subcategories) of $\mathcal{W}$. Its objects $G \in \mathrm{Ob}(\mathbf{Sub}(\mathcal{W}))$ are the retrieved knowledge subgraphs $G \subseteq \mathcal{W}$ collected by the agent at any given step. The morphisms $\mathrm{Hom}_{\mathbf{Sub}}(G_1, G_2)$ are \emph{inclusion mappings} ($G_1 \hookrightarrow G_2$) or morphism restrictions. Intuitively, these mappings formalize the monotonic accumulation of gathered evidence as the agent systematically explores the Web.

\textbf{The Reasoning Category $\mathcal{A}$ (Logic/Answer Space).} 
Finally, the synthesized outputs reside in the Reasoning Category, which captures the internal cognitive workspace where final reports are synthesized. The objects $a \in \mathrm{Ob}(\mathcal{A})$ are formulated propositions, intermediate arguments, or final factual statements. The morphisms $\mathrm{Hom}_{\mathcal{A}}(a_1, a_2)$ represent strict \emph{logical entailment}. A morphism $a_1 \xrightarrow{\text{implies}} a_2$ exists if proposition $a_1$ is a sufficient condition, or provides robust deductive backing, for proposition $a_2$ ($a_1 \implies a_2$).

\begin{theorem}[Categorical Validity of Agentic State Spaces]\label{thm:categorical_validity}
    Let $\mathcal{Q}$, $\mathcal{W}$, $\mathbf{Sub}(\mathcal{W})$, and $\mathcal{A}$ be the agentic state spaces equipped with their respective objects and relations. If we define the morphisms in $\mathcal{Q}$, $\mathcal{W}$, and $\mathcal{A}$ as above, then all four spaces strictly satisfy the Eilenberg-Mac Lane axioms and thus constitute mathematically valid categories.
    \end{theorem}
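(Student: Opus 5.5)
The plan is to realize each of the four spaces as a preorder (thin) category and check the Eilenberg--Mac Lane axioms there. In a preorder category, associativity and the identity laws hold automatically once two things are true: the hom-relation is reflexive and transitive, and each hom-set has at most one element. The relations defined in the previous subsection are, as stated, only ``generating'' relations, so the first step is to make the morphisms precise. For $\mathcal{Q}$, $\mathcal{W}$ and $\mathcal{A}$, I take $\mathrm{Hom}(x,y)$ to be nonempty (a singleton) exactly when $x=y$ or there is a finite directed chain $x=x_0\to x_1\to\cdots\to x_n=y$ of the primitive relation. The primitive relations are task dependency, hyperlink/evidentiary reference, and logical entailment respectively. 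This reflexive--transitive closure is the standard way to turn the stated relations into hom-sets, and it matches the intended semantics.

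Next I check the axioms in turn. Identities are the empty chains ($n=0$), which gives reflexivity. Composition is concatenation of chains, which gives transitivity. Associativity and unitality hold trivially because each hom-set is a singleton or empty, so any two parallel composites coincide. For $\mathcal{A}$ nothing is lost by taking the closure. Logical entailment is already reflexive ($a\implies a$) and transitive (hypothetical syllogism), so it is literally a preorder and no artificial closure is introduced. For $\mathcal{Q}$ the dependency relation may be a DAG, but reflexivity is still added through the identity chain, and transitivity matches the intuition that dependencies propagate.

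If one insists instead on keeping distinct paths as distinct morphisms, for instance to track different link routes between pages in $\mathcal{W}$, I would take the free category on the directed graph. Morphisms are then finite paths, composition is concatenation, and identities are the length-zero paths. Associativity holds because concatenation of sequences is associative. I will remark that either construction yields a valid category, so the result does not depend on this choice.

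For $\mathbf{Sub}(\mathcal{W})$, objects are subcategories $G\subseteq\mathcal{W}$ and morphisms are inclusion functors $G_1\hookrightarrow G_2$, which exist exactly when $G_1\subseteq G_2$. This is the subobject poset under inclusion. The identity is $G\hookrightarrow G$, the composite of inclusions is an inclusion, and associativity is inherited from function composition. If the ``morphism restrictions'' are also meant to be included, I treat $\mathbf{Sub}(\mathcal{W})$ as a subcategory of $\mathbf{Cat}$ whose morphisms are the inclusion functors. Closure under composition and identities then makes it a subcategory of a category, hence a category. One point needs care: retrieved subgraphs must be closed under the composition they contain, so I define objects as the subcategories generated by the retrieved subgraphs.

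The main obstacle is this definitional step rather than any calculation. The informal relations in the paper (``strictly depends on,'' ``acts as a supporting premise,'' hyperlinks) are neither reflexive nor transitive as literally stated. The proof is only honest if the morphisms are defined as their reflexive--transitive closure or as paths. I will state this convention explicitly, and then the axiom verification is routine.
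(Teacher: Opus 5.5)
Your proof is correct. For $\mathcal{Q}$ (a thin category), $\mathcal{W}$ (your free-category alternative is exactly the paper's choice: paths, concatenation, empty paths as identities) and $\mathbf{Sub}(\mathcal{W})$ (the inclusion poset), it matches the paper's proof.

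The real divergence is $\mathcal{A}$, together with your default of making $\mathcal{W}$ thin. The paper treats $\mathcal{A}$ as proof-relevant (``the proof of this chained entailment serves as the composite morphism''). It justifies associativity only by saying chained syllogisms are ``invariant to the order of proof construction.'' Taken literally, this needs proofs identified up to some equivalence: if a composite is formed by applying the syllogism rule to two derivations, neither associativity nor the unit laws hold on the nose. Making entailment a preorder, as you do, makes every axiom immediate. You also make explicit two points the paper glosses over: the identity in $\mathcal{Q}$ is an added reflexive closure of a ``strict'' dependency relation, and a retrieved subgraph must be replaced by the subcategory it generates to be an object of $\mathbf{Sub}(\mathcal{W})$.

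What the paper's route keeps, and yours gives up, is the possibility of several distinct parallel morphisms, which later constructions rely on. Chain-of-Thought degradation is defined by an assertion $h_{\mathrm{agent}} \neq R(g)\circ R(f)$ in $\mathrm{Hom}_{\mathcal{A}}(R(w_1),R(w_3))$, and this can never happen in a thin $\mathcal{A}$. A thin $\mathcal{W}$ likewise collapses the distinct evidentiary routes that the paper's path-based $\mathrm{Hom}_{\mathcal{W}}(w_1,w_2)$ is meant to record. So your version is the cleaner proof of this theorem. To keep the later definitions non-vacuous, you would take the free category for $\mathcal{W}$. For $\mathcal{A}$, you would take proofs either as step sequences composed by concatenation, or modulo an explicit equivalence that enforces associativity and unitality.
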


Equipped with these formal state spaces, the overarching deep research process can now be elegantly modeled as a sequence of structure-preserving functors bridging $\mathcal{Q}$, $\mathbf{Sub}(\mathcal{W})$, and $\mathcal{A}$, which will concretely define the agent's core capabilities of searching and reasoning.

\subsection{Agent Workflows as Functors}

\paragraph{Search Functor}
We formalize the (ideal) search operation as a functor $S: \mathcal{Q} \to \mathbf{Sub}(\mathcal{W})$ that maps an abstract “problem flow” to a concrete “evidence flow.” An ideal search functor must satisfy two central categorical conditions—preservation of morphisms and preservation of composition—while the remaining conditions (object mapping and identity preservation) are typically straightforward to satisfy.

\begin{definition}[Morphism Mapping and Dependency Preservation] \label{def:search_morphism}
For any query dependency $f: q_1 \to q_2$ in $\mathcal{Q}$, the Search Functor must map it to a valid morphism $S(f): S(q_1) \to S(q_2)$ within the retrieval subspace. 
\end{definition}

\begin{remark}[Semantic interpretation of search dependency]\label{rem:search-dependency}
    Definition \ref{def:search_morphism} constrains the agent's search trajectory. If a question \(q_2\) logically depends on \(q_1\), then the retrieved material subobject \(S(q_2)\in\mathbf{Sub}(\mathcal{W})\) should extend \(S(q_1)\) in the categorical sense: there exists a monomorphism
    $\iota: S(q_1)\hookrightarrow S(q_2)$
    in \(\mathbf{Sub}(\mathcal{W})\). (When \(\mathbf{Sub}(\mathcal{W})\) is realised as a poset of concrete subsets, this condition reduces to \(S(q_1)\subseteq S(q_2)\).) 
    
    In practice, this means \(S(q_2)\) ought to organically extend \(S(q_1)\)—for example, by including pages reachable via hyperlinks appearing in \(S(q_1)\), or by containing temporal successors in a time-series search. Accordingly, when an agent decomposes a problem into subquestions, it should preserve and make available the entire set of previously accessed material (i.e., the access history/provenance) associated with those subquestions.
    \end{remark}

\begin{proposition}[Chain-of-Search Consistency] \label{prop:search_consistency}
An ideal deep research agent preserves compositionality during multi-hop retrieval (deep search). For a sequence of sub-task dependencies $q_1 \xrightarrow{f} q_2 \xrightarrow{g} q_3$, it strictly holds that $S(g \circ f) = S(g) \circ S(f)$.
\end{proposition}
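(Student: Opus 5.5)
The plan is to reduce the statement to the fact that $\mathbf{Sub}(\mathcal{W})$ is thin, so that any two parallel morphisms coincide. By Theorem \ref{thm:categorical_validity}, $\mathbf{Sub}(\mathcal{W})$ is a category whose morphisms are inclusions $G_1 \hookrightarrow G_2$. Under the poset realisation singled out in Remark \ref{rem:search-dependency}, it is a preorder category: for any $G_1, G_2$ there is at most one morphism $G_1 \to G_2$, present exactly when $G_1 \subseteq G_2$.

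Fix $q_1 \xrightarrow{f} q_2 \xrightarrow{g} q_3$ in $\mathcal{Q}$. By Theorem \ref{thm:categorical_validity}, $\mathcal{Q}$ is closed under composition, so $g\circ f : q_1 \to q_3$ is a morphism. Definition \ref{def:search_morphism} then yields three morphisms of $\mathbf{Sub}(\mathcal{W})$:
\[
S(f): S(q_1)\to S(q_2),\qquad S(g): S(q_2)\to S(q_3),\qquad S(g\circ f): S(q_1)\to S(q_3).
\]
By Remark \ref{rem:search-dependency}, these correspond to the inclusions $S(q_1)\subseteq S(q_2)$, $S(q_2)\subseteq S(q_3)$ and $S(q_1)\subseteq S(q_3)$.

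The composite $S(g)\circ S(f)$ is the composite of two inclusions. It is therefore the inclusion $S(q_1)\hookrightarrow S(q_3)$ obtained from transitivity of $\subseteq$, and it lies in $\mathrm{Hom}_{\mathbf{Sub}}(S(q_1),S(q_3))$. Because this hom-set has at most one element, $S(g\circ f)$ and $S(g)\circ S(f)$ are the same morphism. The same argument gives $S(\mathrm{id}_q)=\mathrm{id}_{S(q)}$, so $S$ is a genuine functor.

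The main obstacle is justifying thinness. The preliminaries also describe morphisms of $\mathbf{Sub}(\mathcal{W})$ as ``morphism restrictions,'' and that reading would allow parallel non-inclusion maps. I would handle this by making explicit that an \emph{ideal} agent's $S$ takes values in inclusions, as Remark \ref{rem:search-dependency} demands, so the image of $S$ lies in the thin subcategory of subobjects under inclusion. If richer morphisms are admitted, I would instead use the fact that inclusions of subobjects are monomorphisms whose composite is again the canonical inclusion. That is enough because it says two ways of embedding $S(q_1)$ into $S(q_3)$ as subobjects agree whenever both are compatible with the subobject structure. For a real agent, this is the formal sense in which retrieval provenance must be preserved across hops.
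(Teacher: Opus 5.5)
Your argument is correct, but it takes a different route from the paper. The paper never derives the identity. Section~\ref{sec:formulation} stipulates that an ideal search functor \emph{must} preserve composition, and the appendix ``proof'' is interpretive: it reads $S(g\circ f)$ as the evidence obtained by running the multi-hop query end to end, and $S(g)\circ S(f)$ as the evidence assembled from the decomposed sub-queries. It then identifies a violation of the equality with ``Material Loss'' under long-horizon context truncation.

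You instead derive compositionality, and identity preservation as well, from the weaker morphism condition of Definition~\ref{def:search_morphism} plus thinness of $\mathbf{Sub}(\mathcal{W})$. This matches the appendix proof of Theorem~\ref{thm:categorical_validity}, which treats those morphisms as literal inclusions. Your fallback for richer morphisms is sound only once you also require each $S(f)$ to commute with the monomorphisms into $\mathcal{W}$. At that point it is the standard mono-cancellation argument that makes subobject posets thin.

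Your route gives an actual proof and shows where the content lies. In the poset model the equation is automatic, so everything rests on the existence of the inclusions $S(q_1)\subseteq S(q_2)\subseteq S(q_3)$. The price is that the paper's diagnostic reading becomes inexpressible. Dropped intermediate evidence can only appear as the failure of one of those inclusions, which is a violation of Definition~\ref{def:search_morphism}. It can never appear as two distinct parallel morphisms.

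For the equality to be a nontrivial, falsifiable constraint, as the paper intends, hom-sets would need more structure. One example is the actual retrieval path, as in the free path category the paper uses for $\mathcal{W}$. In that setting the identity is no longer provable and must be postulated as part of what ``ideal'' means, which is effectively what the paper does.
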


\paragraph{The Reasoning Functor}

The logical synthesis process is formally defined as an ideal functor $R: \mathcal{W} \to \mathcal{A}$. It represents the fundamental capability of the agent to map unstructured entities from the global web knowledge space $\mathcal{W}$ into structured propositions in the abstract logic space $\mathcal{A}$. (In practice, during a specific task, this functor acts upon the retrieved evidence subgraph $\mathrm{Im}(S)$).

\begin{definition}[Morphism Mapping and Causality Preservation]\label{def:reasoning_morphism}
For any evidentiary morphism $f \in \mathrm{Hom}_{\mathcal{W}}(w_1, w_2)$—such as a hyperlink or an objective causal premise—the reasoning functor must produce a valid logical implication $R(f) \in \mathrm{Hom}_{\mathcal{A}}(R(w_1), R(w_2))$. Consequently, for any such pair $(w_1, w_2)$, the functor induces a mapping between hom-sets: $R_{w_1, w_2}: \mathrm{Hom}_{\mathcal{W}}(w_1, w_2) \to \mathrm{Hom}_{\mathcal{A}}(R(w_1), R(w_2))$. This ensures that the generated logic is strictly anchored to objective external evidence.
\end{definition}

\begin{remark}[Formal Characterization of \textbf{Hallucination}]\label{rem:hallucination}
In categorical terms, an agent hallucinates precisely when the induced mapping $R_{w_1, w_2}$ is not surjective (not full functor). If the agent asserts a target morphism $h \in \mathrm{Hom}_{\mathcal{A}}(R(w_1), R(w_2))$ such that $h \notin \mathrm{Im}(R_{w_1, w_2})$, this assertion is explicitly {hallucinatory}. The agent has fabricated a logical implication $h$ that lacks any valid evidentiary pre-image. The most extreme manifestation occurs when $\mathrm{Hom}_{\mathcal{W}}(w_1, w_2) = \varnothing$ (no evidentiary link exists whatsoever), yet the agent still asserts $h$, effectively generating deductive claims out of thin air.
\end{remark}

\begin{proposition}[Functoriality and Transitivity of Deduction]\label{prop:cot}
By the composition axiom of functors, an ideal reasoning agent inherently preserves deductive transitivity. For any composable evidentiary morphisms $w_1 \xrightarrow{f} w_2 \xrightarrow{g} w_3$ in $\mathcal{W}$, it strictly holds that $R(g \circ f) = R(g) \circ R(f)$.
\end{proposition}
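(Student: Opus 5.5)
The plan is to reduce the statement to the composition axiom that the ideal reasoning functor $R:\mathcal{W}\to\mathcal{A}$ is required to satisfy, after checking that every expression in the claimed equation is well-defined. There are three steps.

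\textbf{Step 1: the composite $g\circ f$ exists in $\mathcal{W}$.} This is the only place where content enters, because Theorem~\ref{thm:categorical_validity} says $\mathcal{W}$ is a category. The point to make precise is what a morphism of $\mathcal{W}$ is. I will take $\mathrm{Hom}_{\mathcal{W}}(w_1,w_3)$ to be the set of finite evidentiary paths, that is, composable chains of hyperlink or premise edges, with concatenation as composition and the empty path as identity. This is the free-category reading implicit in the proof of Theorem~\ref{thm:categorical_validity}. Then $g\circ f$ is the concatenated path $w_1\to w_2\to w_3$. It is a genuine element of $\mathrm{Hom}_{\mathcal{W}}(w_1,w_3)$, so $R(g\circ f)$ is defined through the hom-set map $R_{w_1,w_3}$ of Definition~\ref{def:reasoning_morphism}.

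\textbf{Step 2: the right-hand side is defined in $\mathcal{A}$.} By Definition~\ref{def:reasoning_morphism}, $R(f)\in\mathrm{Hom}_{\mathcal{A}}(R(w_1),R(w_2))$ and $R(g)\in\mathrm{Hom}_{\mathcal{A}}(R(w_2),R(w_3))$. Their codomain and domain match, so the composite $R(g)\circ R(f)$ exists because $\mathcal{A}$ is a category (again Theorem~\ref{thm:categorical_validity}). Concretely, it is the chained entailment $R(w_1)\Rightarrow R(w_2)\Rightarrow R(w_3)$. Both sides therefore lie in $\mathrm{Hom}_{\mathcal{A}}(R(w_1),R(w_3))$.

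\textbf{Step 3: invoke the composition axiom.} The word ``ideal'' means that $R$ is a functor in the sense of the Preliminary. Functor preservation of composition then gives $R(g\circ f)=R(g)\circ R(f)$ immediately.

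The only real obstacle is the modeling choice in $\mathcal{A}$. If $\mathcal{A}$ is a thin category, with at most one entailment between any two propositions (a preorder), the equality is automatic: both sides are parallel arrows $R(w_1)\to R(w_3)$, and parallel arrows coincide. I would state this as a remark, since it shows the content of the proposition is really the \emph{existence} of $R(g\circ f)$, namely transitivity of deduction. If $\mathcal{A}$ instead has distinct proof terms as morphisms, the equality is not automatic and holds only because it is imposed as the functoriality requirement. I would make that explicit, and note that a non-ideal agent that reasons about the two-hop path ``holistically'' breaks exactly this identity.
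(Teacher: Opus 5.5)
Your proposal is correct and follows the paper's proof: the paper likewise notes that $g\circ f\in\mathrm{Hom}_{\mathcal{W}}(w_1,w_3)$ exists because $\mathcal{W}$ is a category, obtains $R(g\circ f)=R(g)\circ R(f)$ directly from the composition axiom of the functor $R$, and, as in your closing remark, treats an end-to-end assertion $h_{\mathrm{agent}}\neq R(g)\circ R(f)$ as the mark of a non-ideal (``CoT-degraded'') agent. Your remark on whether $\mathcal{A}$ is thin is a worthwhile addition the paper omits, since the paper's diagnostic only has content when $\mathrm{Hom}_{\mathcal{A}}(R(w_1),R(w_3))$ can contain more than one arrow.
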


\paragraph{Report Generation}

The ultimate objective of a deep research process is to synthesize a final report for a given query $q$. The Reasoning Functor $R$ transforms all retrieved fragmented evidence from $S(q)$ into a structural graph of partial arguments within $\mathcal{A}$. From the defintion from standard textbook \citep{mac2013categories}, we should formulate the final report, denoted as $Rep(q)$, as the \emph{colimit} of this graph: $Rep(q) \cong \text{colim} R(S(q))$.

Consequently, if the report generated by an agent fails to become a colimit—for instance, if conflicting conclusions remain unresolved, or if fabricated summaries are added—it indicates that the deep research process is non-natural during the aggregation stage.

\section{Design of Evaluation}

\newcommand{\cmark}{\textcolor{green!70!black}{\ding{51}}}
\newcommand{\xmark}{\textcolor{red}{\ding{55}}}

\begin{table}[t]
\centering
\caption{Comparison of our proposed benchmark against existing evaluation frameworks.}
\label{tab:benchmark_comparison}
\resizebox{\textwidth}{!}{%
\begin{tabular}{l ccccc}
\toprule
\textbf{Benchmark} & \makecell{\textbf{Sequential} \\ \textbf{Tracing} \\ \textit{(Chains)}} & \makecell{\textbf{Multi-Source} \\ \textbf{Synthesis} \\ \textit{(Pullbacks)}} & \makecell{\textbf{Substructure} \\ \textbf{Disentanglement} \\ \textit{(Re-ordering)}} & \makecell{\textbf{Ontological} \\ \textbf{Probing} \\ \textit{(Yoneda)}} & \textbf{Theory-Based} \\
\midrule
BrowseComp              & \cmark & \xmark & \xmark & \xmark & \xmark \\
WebShaper               & \cmark & \cmark & \xmark & \xmark & \cmark \\
DeepResearch Bench      & \cmark & \cmark & \xmark & \xmark & \xmark \\
Finance Agent Benchmark & \cmark & \cmark & \xmark & \xmark & \xmark \\
FinSearchComp           & \cmark & \cmark & \xmark & \xmark & \xmark \\
\midrule
\textbf{Ours}           & \cmark & \cmark & \cmark & \cmark & \cmark \\
\bottomrule
\end{tabular}%
}
\end{table}

The inherent scale and topological complexity of the real-world knowledge and logic categories pose significant challenges to the strict functoriality of an agent's Search and Reasoning operations. To systematically quantify an agent's capacity for structural preservation against phenomena like context truncation and hallucination, we propose a four-dimensional categorical evaluation framework:

\begin{itemize}
\item Connectivity via Simple Chains: Evaluating the agent's ability to perform linear, transitive reasoning across sequential information nodes.

\item Consistency via V-Structures (Pullbacks): Assessing the agent's capacity to handle multi-source verification, intersection finding, and conflict resolution.

\item Organization via Substructure Re-ordering: Testing the agent's skill in disentangling complex dependencies and restructuring chaotic information into ordered flows.

\item Structure Validity via the Yoneda Probe: Verifying the ontological reality of entities by probing their external relationships and behavioral definitions, rather than relying solely on internal attributes.
\end{itemize}

\subsection{Type I: Connectivity via Simple Chains}
\label{subsec:type_i_connectivity}

We ground the design of Type I tasks in the principle of \emph{Retrieval-Verification Asymmetry}. Specifically, we construct queries where the forward search path (Problem $\to$ Answer) requires traversing an obscure, step-by-step trajectory, while the backward verification path (Answer $\to$ Problem) remains computationally straightforward. In this type, each node entity can be found on a single webpage link.

To simulate the practical complexities of deep research, we decompose linear reasoning into three distinct sub-tasks, each targeting a specific phase of information linkage: Textual Fingerprinting, Vertical Knowledge Filtering, and Numerical Traceability.

\textbf{Identity Resolution via Textual Fingerprints (Task 1).} 
In this task, the agent is presented with a semantically generic yet syntactically unique text snippet (e.g., a philosophical reflection on economics from a course note). It must resolve the author's identity and cross-reference their activity on a specific platform, such as their GitHub contribution frequency on a specific UTC date. 
To ensure difficulty, we employ \emph{Adversarial Lexical Selection} for the anchor text, purposefully injecting high-frequency terminology to maximize lexical collision in standard search indices and neutralize simple keyword matching. Furthermore, the task enforces \emph{Cross-Domain Hopping} between unstructured academic content and structured commit logs. By targeting niche repositories and demanding precise timestamps, we ensure the query bypasses the ``direct answer'' features of commercial search engines, necessitating deep page parsing.

\textbf{Hierarchical Filtering in Vertical Knowledge Bases (Task 2).} 
This task requires the agent to identify a target entity (e.g., a fund management company) that satisfies a conjunction of strict quantitative constraints across disjoint temporal windows on a single webpage (e.g., Net Asset Value at $T_1$ and Total Shares at $T_2$). Upon successful identification, the agent executes a secondary query regarding a dynamic attribute, such as new personnel qualifications within a narrow, recent time frame. 
Unlike open-domain QA, this simulates \emph{Vertical Search}. The large-scale financial figures constrain the search space but strictly require \emph{Schema Awareness}—the agent must know \textit{where} to look (e.g., official regulatory disclosures) rather than just \textit{what} to look for. Additionally, a \emph{Temporal Precision} constraint using data from the immediate past prevents the model from relying on memorized parametric knowledge, forcing real-time interaction with dynamic web environments.

\textbf{Inverse Entity Resolution via Numerical Fingerprinting (Task 3).} 
In this task, the agent receives a set of obscure numerical rankings (e.g., specific performance rankings of four different products) serving as a unique ``numerical fingerprint'' for a specific fund manager. The objective is to decode the manager's identity and trace their career trajectory back to a distant historical point, such as their employer ten years prior. 
This design relies on \emph{Anti-Shortcut Engineering}. We intentionally sample rankings from the long tail (e.g., 130th or 746th) rather than the head. While top rankings are heavily covered in news articles—providing trivial shortcuts—mid-tier rankings force the agent to query raw performance databases, demanding granular data processing. Moreover, the \emph{Temporal Multi-hop} aspect tests the agent's ability to maintain identity persistence across time. It links an instantaneous current state to a static historical attribute, heavily penalizing agents prone to hallucinating based on current affiliations.

\subsection{Type II: Consistency via V-Structures (Pullbacks)}
While Type I evaluates linear connectivity, Type II evaluates the agent’s capacity to model V-structures, formally known as \emph{pullbacks} in category theory. In these tasks, the agent must retrieve information from disjoint sources ($S_1, S_2, \dots, S_n$) and synthesize them to discover latent constraints, verify intersections, or detect conflicts.

The core challenge lies in the \emph{asymmetry of synthesis}: while individual data points are computationally trivial to retrieve, recognizing their necessary intersection requires profound structural awareness. To systematically benchmark this capability, we design five abstract task paradigms, each instantiated through complex, real-world scenarios.

\textbf{Distributed Quantitative Synthesis (Task 4).} 
This task requires the agent to deduce a specific metric by synthesizing sparse, interdependent quantitative constraints scattered across non-adjacent structural nodes, such as disparate temporal reports. Because the target answer does not exist verbatim on the web, this setup forces \emph{implicit equation solving}. The agent must retrieve multiple disconnected documents, align their dynamic data schemas, formulate an algebraic relationship using a provided delta (e.g., a growth value), and solve for the unknown. To prevent the model from relying on memorized parametric knowledge or high-level summaries, we enforce \emph{anti-memorization via long-tail sampling} (e.g., querying the 7th largest holding of a fund), thereby necessitating raw, multi-document data extraction and mathematical alignment.

\textbf{Entity Resolution under Hard-Negative Interference (Task 5).} 
In this task, the agent must accurately profile a specific entity using a rigid, unique designator while navigating severe probabilistic distractors that share identical superficial features. We employ \emph{adversarial attribute injection} by constructing scenarios where the target entity—defined strictly by a unique ID—is an obscure figure, whereas the ``hard negative'' distractor is a high-profile individual sharing the exact same name. This paradigm stress-tests the tension between \emph{parametric prior and contextual fact}. It evaluates whether the agent anchors its reasoning on the deterministic pullback (faithfully matching the ID to the correct person) or hallucinates by falling back on its pre-training distribution (lazily matching the name to the famous entity).

\textbf{Implicit Boundary Condition Detection (Task 6).} 
In this task, the agent is presented with a standard operational instruction, but the true objective is for it to autonomously discover and apply an unmentioned, domain-specific boundary condition that mathematically invalidates the instruction. Unlike traditional reasoning benchmarks that explicitly provide the rules, deep research requires \emph{meta-cognitive rule triggering}—discovering the necessity of a rule in the first place. We instantiate this using a hypothetical financial trade that exceeds a statutory limit. The agent must independently realize that the action is bounded by compliance laws, retrieve real-time baseline data (e.g., net asset value), and compute the intersection to flag the latent violation.

\textbf{Cross-Domain Anomaly Intersection (Task 7).} 
In this task, the agent receives a seemingly benign user request that covertly embeds partial, fragmented identifiers matching an entity in an external, open-world risk database. This requires a \emph{fuzzy database join}. Although not explicitly instructed to perform a background check, regulatory compliance implicitly dictates it. We instantiate this by embedding the partial identity features of a fugitive into a routine customer service scenario. This creates a structural conflict between helpfulness (executing the user's task) and systemic constraints (flagging the anomaly), rigorously testing the agent's capacity for unprompted anomaly detection across disparate domains—a direct test of \emph{safety alignment override}.

\textbf{Latent Dependency Satisfiability (Task 8).} 
In this task, we disguise a dependency conflict problem within a dense, unstructured narrative. The agent must look past domain-specific jargon and narrative flow to perform \emph{unstructured constraint extraction}, ultimately identifying the underlying structural graph and any logical contradictions. For \emph{systemic conflict resolution}, we instantiate this paradigm by describing a multi-year software setup timeline. By anchoring specific library installations to historical timestamps, the agent must synthesize the temporal context, retrieve historical compatibility matrices, and identify latent mutual exclusivity—such as version conflicts that render the described outcome physically impossible.

\subsection{Type III: Organization via Substructure Re-ordering}

While Type I and Type II evaluate the discovery of existing connections and intersections, Type III tests the agent's capacity to act as an structure processor—specifically, its ability to perform \emph{substructure re-ordering}. In categorical terms, this is analogous to applying a sorting functor to construct a strictly ordered subcategory from chaotic data, and subsequently evaluating limits or tracing morphisms exclusively within that newly formed topological structure.

Because commercial search engines inherently return flat, relevance-based results, Type III tasks force the agent to override this default topology. The agent must impose a strict mathematical ordering (e.g., ranking by floating-point values) on retrieved raw data, using this dynamically generated ordered list as an anchor for complex downstream reasoning. We formalize this capability through four distinct task paradigms.

\textbf{Rank-Conditioned Temporal Multi-Hop (Task 9).} 
In this task, the agent must identify an entity based on a dynamic ordinal rank (e.g., the 5th highest-performing asset) at a specific discrete time $T_1$. Once resolved, the agent executes a temporal jump to retrieve a highly volatile, state-dependent attribute of that entity at a distant historical time $T_2$. This design deliberately breaks \emph{search index shortcuts}. Because search engines index static names rather than daily dynamic ranks, the agent must autonomously construct the Top-$N$ substructure, extract the $k$-th element, and only then formulate the secondary query. Furthermore, to enforce \emph{temporal state precision}, $T_2$ is strategically chosen to be a date immediately following a state transition (e.g., right after a manager resigns). This severely penalizes agents that hallucinate based on parametric priors or historical averages.

\textbf{Range-Bounded Topological Sorting (Task 10).} 
In this task, the agent is required to filter a massive, cross-sectional dataset using a continuous floating-point interval $[a, b]$, dynamically generating a strict ordinal ranking of the resulting substructure from highest to lowest. This paradigm demands \emph{algorithmic exhaustiveness}. Commercial search engines cannot natively execute granular queries like ``return all entities with a growth rate between 7.3\% and 7.6\%.'' Consequently, the agent must autonomously retrieve broad macro-level data tables, internalize the context, and act as a deterministic filtering and sorting algorithm. This serves as a rigorous stress test for the LLM's inherent numerical reasoning and its ability to perform \emph{granular float-point comparison} without omitting entities near the boundary conditions.

\textbf{Top-$K$ Substructure Aggregation and Synthesis (Task 11).} 
In this task, after retrieving and establishing a strictly ordered Top-$K$ substructure based on a primary metric $X$ (e.g., daily growth rate), the agent must compute macroscopic statistical invariants—such as the mean, variance, or mathematical range—over a parallel metric $Y$ (e.g., net asset value) across those $K$ entities. This requires \emph{Map-Reduce cognitive emulation}, evaluating whether a dynamically re-ordered substructure can serve as a stable foundation for complex mathematical synthesis. It heavily probes \emph{context window integrity}. Calculating an aggregate statistic forces an operation over the entire constructed subcategory, aggressively penalizing any data omission or alignment hallucination while the agent holds multiple distinct entities in its working memory.

\textbf{Rank-Anchored Time-Series Computation (Task 12).} 
In this task, the agent identifies a target entity via its ordinal rank on a singular discrete timestamp $T_{\text{rank}}$. It must then pivot its analytical lens to a continuous temporal window $[T_{\text{start}}, T_{\text{end}}]$ to compute a longitudinal time-series statistic (e.g., the volatility or range of cumulative values) for that specific entity. This necessitates \emph{orthogonal dimension shifting}. The agent must seamlessly switch between two fundamentally different retrieval modalities: cross-sectional retrieval (evaluating all entities on a single day to establish the rank) and longitudinal retrieval (evaluating a single entity across multiple days). Ultimately, this demands \emph{complex working memory}, ensuring the agent maintains rigorous schema isolation and does not conflate the metric used for initial ranking with the metric required for the downstream time-series computation.

\subsection{Type IV: Structure Validity via the Yoneda Probe}

The ultimate hallmark of an autonomous Deep Research Agent lies not merely in its ability to retrieve existing information, but in its capacity to confidently prove non-existence. Rooted in the philosophy of falsification and the Yoneda Lemma \citep{mac2013categories}, Type IV tasks evaluate the agent's ability to authenticate the ontological reality of an entity (or attribute) by exhaustively probing its external mappings (morphisms) within the knowledge space $\mathcal{W}$. If an entity does not interact with the real world—yielding null retrieval morphisms—the agent must halt execution and refute the premise, rather than fabricating a hallucinated structure.

Large language models inherently suffer from severe \emph{sycophancy} and \emph{premise acceptance}. If a user's prompt implies the existence of a document or a data field, the agent's parametric prior strongly biases it to fulfill the instruction via hallucination. To benchmark the agent's zero-shot resistance to these presuppositional traps, we design three adversarial task paradigms.

\textbf{Schema-Level Asymmetry and Null-Field Refutation (Task 13).} 
In this task the agent is tasked with extracting a highly specific, quantitative metric (e.g., the asset scale of the largest underlying holding) across a batch of peer entities, such as the Q1 investment reports for five distinct fund companies. Crucially, while the reports exist for all entities, the target data field is structurally omitted by a subset of them. This paradigm deliberately induces \emph{peer-pressure hallucination} to test for local non-existence. By batching valid entities with missing data alongside entities that do contain the data, we create structural peer pressure. The agent is heavily biased to maintain output symmetry (e.g., generating a complete, densely populated table) and will often hallucinate the missing metrics based on surrounding context. To succeed, the agent must perform an \emph{exhaustive null-verification} of the specific reports, explicitly recognize the absence of the schema field, and correctly report the $\emptyset$ state, proving that its output is strictly bounded by retrieved evidence rather than structural expectation.

\textbf{Macro-Ontological Invalidity via the Phantom Entity Probe (Task 14).} 
In this task, the agent is instructed to interpret the ``core arguments'' of a completely fabricated, non-existent report attributed to a high-authority source, covering an absurd interdisciplinary topic (e.g., a Goldman Sachs report on ``Quantum Encryption in Modern Agricultural Irrigation''). Testing for global non-existence, the difficulty here lies in the adversarial combination of a high-trust anchor and syntactically plausible academic jargon, pitting \emph{authority bias against ground truth}. The agent must execute a \emph{Yoneda Falsification}. Upon conducting a broad exploratory search and finding zero legitimate incoming or outgoing references ($\text{Hom}(X, Y) = \emptyset$) to this exact report, it must use this absolute absence of connectivity to formally deduce the non-existence of the object itself, refusing to generate plausible arguments from its parametric memory.

\textbf{Asymmetric Comparative Hallucination (Task 15).} 
In this task, the prompt demands a granular comparative analysis between two highly correlated entities (e.g., the 2024 annual reports of CATL and EVE Energy) regarding a highly plausible latent feature, such as specific wording in the ``Geopolitical Risk in the Middle East'' section. However, this specific feature exists in \emph{at most} one of the reports. LLMs are highly optimized to follow comparative instructions; therefore, when object $A$ possesses feature $f$ but object $B$ lacks it, the instruction creates a massive probabilistic gradient for the model to hallucinate $f(B)$ in order to execute the comparison—a phenomenon we term \emph{instruction-forced feature alignment}. To pass this stress test, the agent must actively engage in \emph{breaking the relational presupposition}. It must realize that the prerequisite for comparison is invalid, break the requested instruction format, and declare an asymmetric falsification, thereby proving its allegiance to factual topology over user-defined operational flow.

\section{Experiments}

To rigorously assess the capability boundaries of current state-of-the-art systems and ensure a comprehensive, equitable comparison, we evaluate the performance of $16$ leading mainstream models and AI products. To capture diverse problem-solving paradigms, these models are carefully selected to span three core competencies: complex reasoning, information retrieval (search), and autonomous deep research. Accordingly, we categorize them into the following three groups:

\begin{itemize}
    \item \textbf{Reasoning Models (5 products):} Gemini-3-pro-Reasoning (API)~\cite{google_gemini3pro}, Claude-4.5-Sonnet-Reasoning~\cite{claude_4_5}, Doubao-Reasoning (API)~\cite{doubao}, GPT-5-Reasoning (API)~\cite{openai_gpt5}, Qwen3-Reasoning (API)~\cite{qwen3_235b}.
    \item \textbf{Search-augmented Models (5 products):} Qwen3-Search (web)~\cite{qwensearch}, Doubao-Search (web)~\cite{doubao}, Perplexity-Sonar-Pro-Search~\cite{perplexity_sonar_pro}, GPT-5-Search (web)~\cite{openai_gpt5}, Claude-4.6-Sonnet-Search~\cite{claude_4_6}.
    \item \textbf{Deep Research Agents (6 products):} Doubao Deep Research (web)~\cite{doubao}, Qwen Deep Research (web)~\cite{qwensearch}, Grok Deep Research (web)~\cite{grok4}, OpenAI Deep Research~\cite{openai_deep_research}, Perplexity Deep Research~\cite{perplexity_deep_research}, Gemini-3-Pro deep Research~\cite{google_gemini3pro}.
\end{itemize}

For task evaluation, we adopt a strict binary scoring metric ($0$ for failure, $1$ for success) to establish clear and unambiguous performance boundaries. To mitigate the inherent biases and potential hallucinations of automated evaluators while maintaining evaluation scalability, we implement a robust hybrid evaluation pipeline that integrates the ``LLM-as-a-Judge'' paradigm with meticulous human verification. Specifically, we first deploy GPT-5 to generate initial automated assessments based on predefined rubrics. Subsequently, domain experts conduct a comprehensive manual review of these preliminary scores. This rigorous two-stage verification process guarantees the utmost reliability, objectivity, and correctness of our final benchmark results.

\subsection{Evaluation Results}

Figure~\ref{fig:grouped_results_heatmap} summarizes the evaluation results across the four task paradigms, with the average score of each type calculated among all samples. While the overall performance remains modest in absolute terms, the frontier is no longer concentrated in a single model family. Perplexity Deep Research attains the best Type I (10.8\%) and Type II (32.7\%) averages, Grok Deep Research leads Type III (26.3\%), and Grok Deep Research ties Qwen3 w/Reasoning for the best Type IV score (46.9\%). Grok still achieves the strongest Total Average at 19.9\%, but the broader pattern is one of distributed specialization rather than universal dominance. A more detailed per-task score can be found in Appendix~\ref{detailed eval}.

\textbf{Structural synthesis remains difficult, but progress is no longer uniformly sparse.} 
Type I and Type II tasks continue to expose the practical cost of retrieval-verification asymmetry. Type I remains especially challenging: even the best model, Perplexity Deep Research, reaches only 10.8\%, indicating that multi-hop structural traversal is still a severe bottleneck. By contrast, Type II now shows clearer separation among search-heavy systems, with Perplexity Deep Research (32.7\%), Claude-4.6-Sonnet w/Search (28.6\%), and OpenAI Deep Research (20.4\%) substantially outperforming earlier baselines. This suggests that once candidate evidence can be stably gathered, stronger retrieval orchestration yields tangible gains in downstream structural verification.

\textbf{Topological re-ordering still favors agentic and search-intensive systems.} 
Type III continues to punish pure parametric reasoning: most reasoning-only models remain at or near zero. Search-augmented and deep research systems perform markedly better, with Doubao w/Search reaching 16.3\% and Grok Deep Research pushing the frontier to 26.3\%. OpenAI Deep Research (6.9\%) and Doubao Deep Research (6.3\%) also remain non-trivial, indicating that explicit retrieval loops provide a more reliable substrate for re-ranking and compositional aggregation than reasoning alone.

\textbf{Resisting sycophancy remains a frontier capability rather than a default property of deep research agents.} 
The Yoneda Probe tasks (Type IV), designed to evaluate ontological verification and resistance to hallucinated premises, continue to reveal sharp implementation variance. Grok Deep Research matches Qwen3 w/Reasoning at a benchmark-leading 46.9\%, showing that an autonomous agent can indeed retain strong falsification priors. However, several other deep research systems remain far lower, such as Qwen Deep Research (4.1\%) and Gemini-3-Pro deep Research (5.0\%), while search-augmented systems like Qwen3 w/Search and Perplexity-Sonar-Pro w/Search both reach 26.5\%. The ability to reject false premises therefore appears to depend on careful system design rather than on the mere presence of an agentic loop.

\textbf{Frontier Deep Research systems are strong, but their strengths are highly specialized.} 
The updated results no longer support a simple ``Grok versus everyone else'' narrative. Instead, the strongest deep research systems divide the benchmark by capability: Perplexity Deep Research leads the structurally intensive Type I and Type II settings, while Grok Deep Research dominates Type III and shares the best Type IV score, ultimately retaining the highest total average. OpenAI Deep Research also occupies a strong middle ground with an 11.41 total average. This pattern suggests that current autonomous agents are not converging toward a single robust template; rather, different implementations are optimizing different mixtures of retrieval breadth, ranking stability, and falsification discipline.

\textbf{Some hard configurations are beginning to crack, but true null-verification remains unresolved.} 
Task 13 (within Type IV) still yields a universal 0.0\% success rate, making it the clearest surviving blind spot in the benchmark. By contrast, Task 10 (within Type III) is no longer a universal collapse: Gemini-3-Pro deep Research reaches 10.0\%, showing that at least some systems can now recover from this ordering bottleneck. Task 12 also sees multiple breakthroughs, with OpenAI Deep Research and Perplexity Deep Research both reaching 20.0\%. The persistence of Task 13 despite these improvements suggests that schema-level null verification remains a deeper and more fundamental challenge than ranking or temporal aggregation alone.

\textbf{Extreme intra-type variance still reveals brittle, localized heuristics.} 
Although several models now post higher macro-level averages, their gains remain sharply task-specific. In Type II, OpenAI Deep Research succeeds on Task 4 (11.1\%) and Task 6 (10.0\%), Perplexity Deep Research improves Task 4 (6.7\%) and Task 5 (8.3\%), and Gemini-3-Pro deep Research breaks through on Task 8 (16.0\%); yet none of them translates these wins into consistently strong performance across all five tasks. A similar fragmentation appears in Type III: Perplexity Deep Research dominates Task 9 (45.0\%), Grok Deep Research leads Task 11 (25.0\%), OpenAI Deep Research and Perplexity Deep Research tie on Task 12 (20.0\%), and Gemini-3-Pro deep Research alone scores on Task 10 (10.0\%). These distributed breakthroughs indicate genuine progress, but they also show that current systems still rely on narrow, task-local competencies rather than a unified mastery of structural mapping and intersection verification.

\begin{figure}[H]
    \centering
    \includegraphics[width=\textwidth]{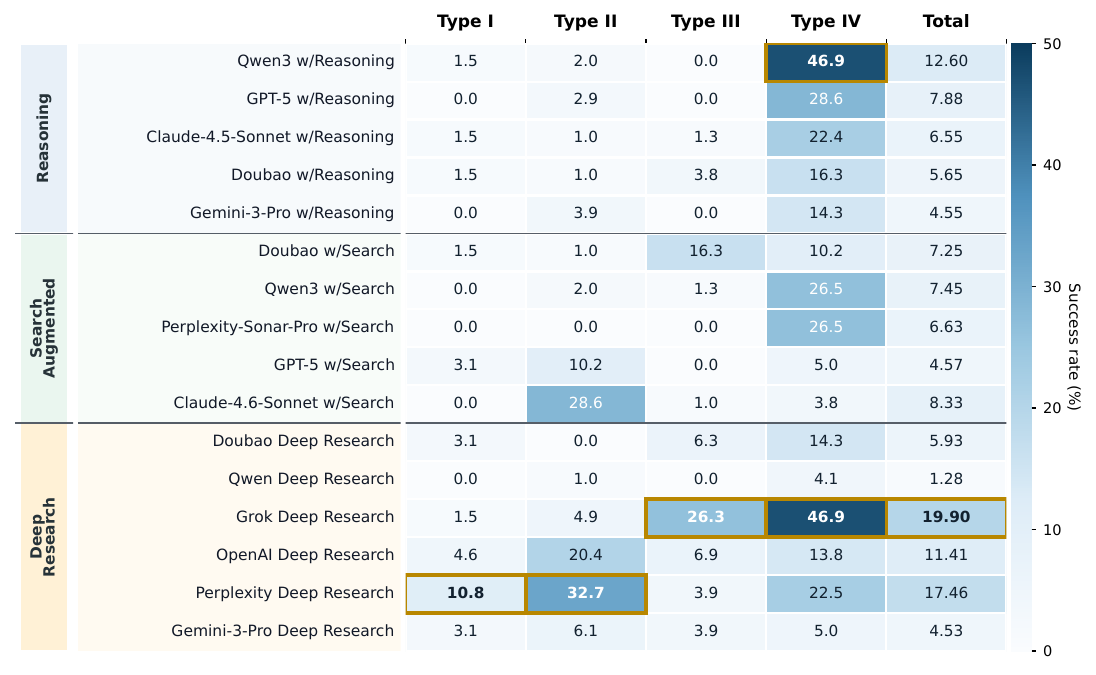}
    \caption{Mechanism-grouped performance heatmap across four structural task types. Rows are grouped by model mechanism, and columns report average success rates (\%) for each task type and the total average. Darker cells indicate stronger performance; gold outlines mark the best score within each column.}
    \label{fig:grouped_results_heatmap}
\end{figure}

\subsection{Disentangling the $S$ and $R$ Scores}
\label{subsec:weights_explanation}

Existing deep research agent benchmarks typically rely on monolithic, binary accuracy metrics. When an agent fails a complex open-ended query, a simple ``zero'' score provides limited diagnostic information regarding the underlying failure modes.

To enable a more nuanced and insightful evaluation, we introduce a dual-weighting diagnostic system for each task, consisting of the \textbf{Search Score ($S$)} and the \textbf{Reasoning Score ($R$)}. Detailed rubrics for these scores are provided in Table \ref{tab:task_breakdown} in Appendix \ref{S and R}. These weights represent a deliberate and analytical allocation of the primary performance bottlenecks within each task, with $S$ and $R$ scores summing to $100\%$. Importantly, they are designed to correspond directly to the theoretical difficulty of successfully executing the Search Functor and the Reasoning Functor, respectively.

\begin{figure}[htbp]
    \centering
    \includegraphics[width=0.8\textwidth]{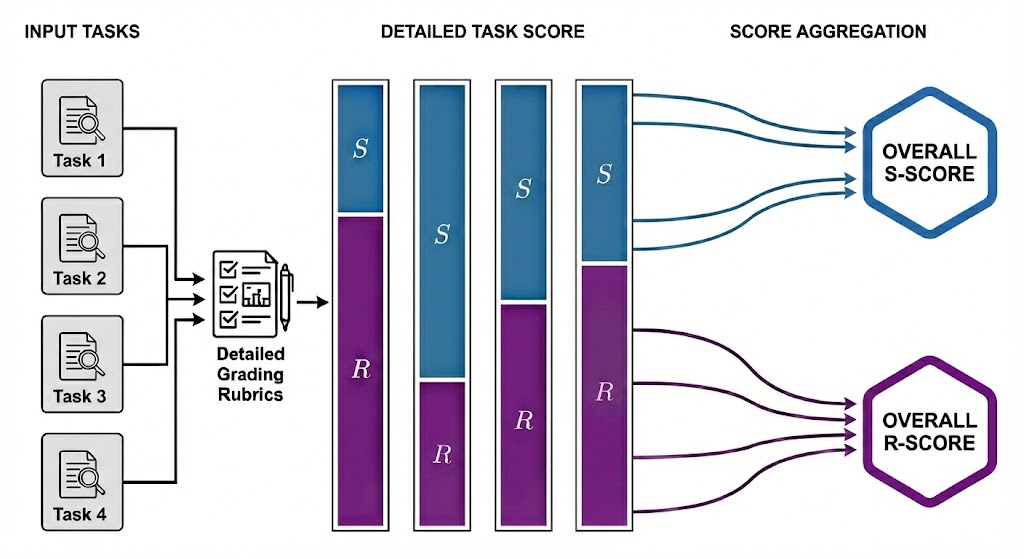}
    \caption{An illustrative overview of the scoring pipeline.}
    \label{fig:score}
\end{figure}

\textbf{Source and Derivation of the Scores.}
The $S$ and $R$ scores are not arbitrarily assigned. Instead, they are intrinsically derived from the inherent design and requirements of each task paradigm. Specifically, the $S$-score quantifies the adversarial complexity and difficulty of the retrieval path, while the $R$-score reflects the cognitive load and complexity required to process, integrate, and reason over the retrieved information. By explicitly decoupling these two critical dimensions, the benchmark provides a comprehensive diagnostic spectrum that can reveal fine-grained capability profiles of different agent systems. This design aims to shift the evaluation paradigm from purely outcome-based scoring toward a more process-oriented diagnostic approach, offering researchers and developers actionable insights into where specific improvements are most needed.

Figure~\ref{fig:sr_scores} presents the decoupled Search ($S$) and Reasoning ($R$) scores across a range of models, grouped by their core mechanisms. The results again validate the utility of this diagnostic framework, but they now reveal complementary frontier profiles rather than a single dominant template. Within the Deep Research group, Perplexity Deep Research achieves the highest Search Score (12.9\%), while Grok Deep Research attains the strongest Reasoning Score (21.6\%); OpenAI Deep Research also remains strong on both dimensions (7.9\% / 12.9\%). This separation helps explain why different frontier systems lead different task types in Figure~\ref{fig:grouped_results_heatmap}.

This fine-grained breakdown underscores the significant variability in how current deep research pipelines combine retrieval and reasoning capabilities, and highlights the value of analyzing these two dimensions separately rather than relying solely on overall task success rates.

\begin{figure}[!ht]
    \centering
    \includegraphics[width=\textwidth]{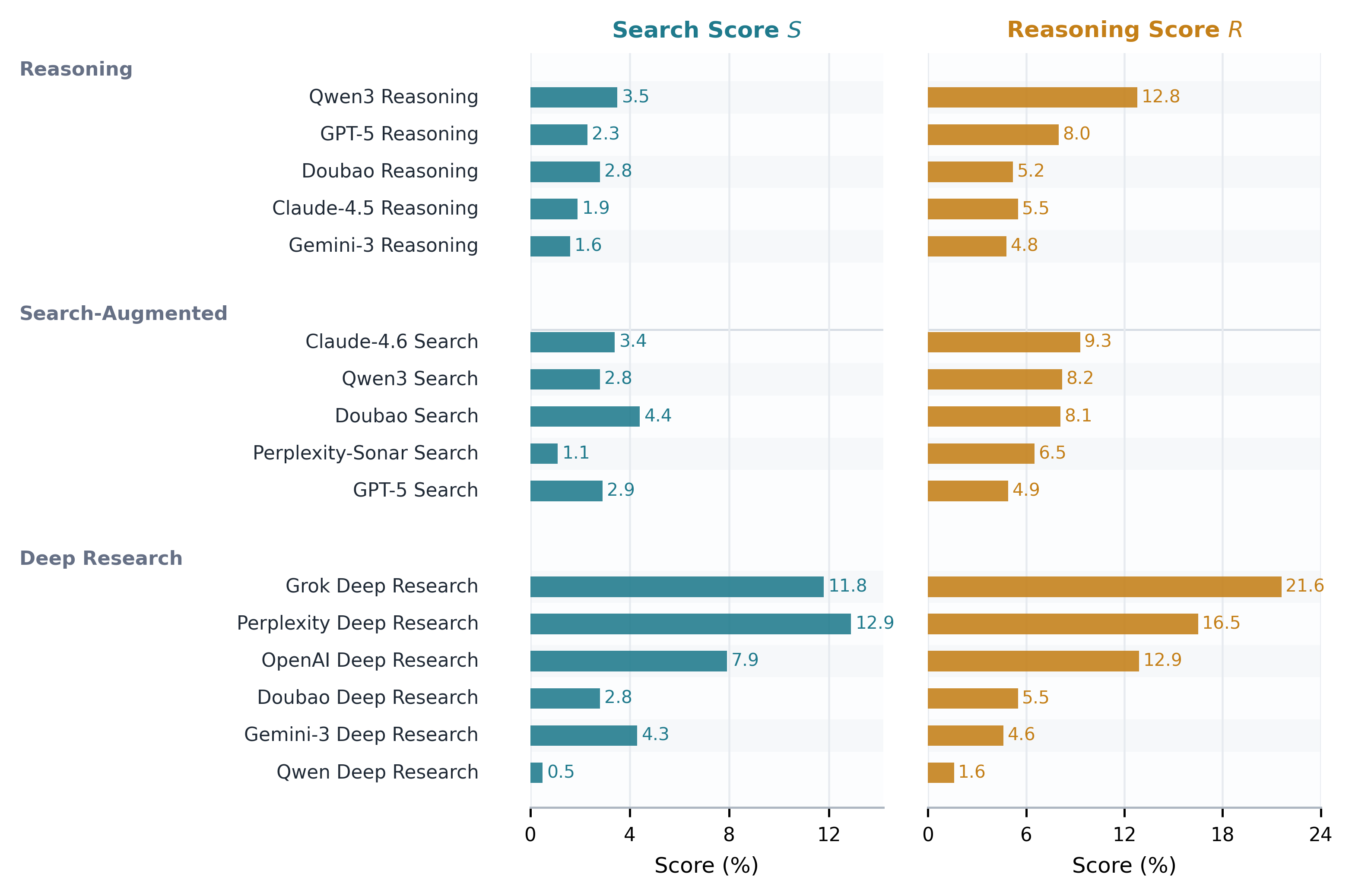}
    \caption{Search--reasoning diagnostic profiles across evaluated systems. Models are grouped by mechanism and sorted within each group by diagnostic strength. The left panel reports Search Score ($S$), and the right panel reports Reasoning Score ($R$); values denote average diagnostic scores (\%).}
    \label{fig:sr_scores}
\end{figure}

\section{Related Works}

\paragraph{Category Theory in AI.}
Category theory has provided profound mathematical foundations for various branches of artificial intelligence~\citep{shiebler2021category}, spanning backpropagation~\citep{fong2019backprop}, probability~\citep{fritz2020synthetic}, conditional independence~\citep{mahadevan2022categoroids}, and core learning paradigms~\citep{harris2019characterizing,mahadevan2022unifying}. Recent advancements have successfully applied a categorical lens to natural language~\citep{bradley2022enriched}, foundation models~\citep{yuan2023power}, and the functional equivalence of neural representations~\citep{chen2023goingneuralnetworkfeature}. Although prior works construct categories and functors specific to those respective areas, our work proposes a novel formalization: establishing a categorical framework to explicitly model the ideal behavior of deep research agents.

\paragraph{Deep Research Benchmarks.}
Traditional QA and reasoning datasets, including prominent suites like {GAIA}~\citep{mialon2023gaiabenchmarkgeneralai}, {WebWalker}~\citep{wu2025webwalkerbenchmarkingllmsweb}, and WebShaper \citep{tao2025webshaperagenticallydatasynthesizing}, have successfully measured analytical capabilities but typically bypass the research phase by providing pre-packaged context. Attempts to introduce open-domain data gathering often fall short by relying on static historical archives~\citealp{bigeard2025financeagentbenchmark}. This static nature inadvertently tests a model's internal memory rather than its active, exploratory search proficiency~\cite{wei2025browsecomp}.

To properly assess deep research agents, the evaluation paradigm must shift from static data processing to dynamic, end-to-end autonomy. Pioneering works in this direction evaluate interactive tool use: DeepResearchBench~\citep{du2025deepresearchbenchcomprehensivebenchmark} introduces metrics to evaluate open-ended report generation. The BrowseComp series~\citep{wei2025browsecomp,zhou2025browsecompzh,chen2025browsecompplus} sets standards for persistent web navigation and creative search tactics, while FinSearchComp~\citep{hu2025finsearchcomprealisticexpertlevelevaluation} designs time-sensitive tasks for financial contexts. However, there remains an urgent need for a theoretically-grounded evaluation paradigm designed specifically for complex, goal-directed QA tasks that can systematically evaluate the concrete capabilities of deep research agents.
\section{From Theory to Practice: Design Implications for Deep Research Agents}
\label{sec:implementation}

Our framework is intended not only as an evaluation lens, but also as a source of concrete design guidance for deep research agents. The key implication of the categorical decomposition in Section~\ref{prelim} is that deep research should not be treated as a monolithic prompt-response process. Instead, system quality should be improved along two distinct structural axes corresponding to the Search Functor $S$ and the Reasoning Functor $R$. Guided by this view, we implement a theory-informed research pipeline that externalizes the major stages of the workflow into inspectable artifacts: user intent is decomposed into bounded research units, retrieval is materialized into persistent evidence objects, cross-source agreement is resolved through explicit categorical operators, and the final report is produced only after provenance-preserving synthesis. In this sense, the theory does not merely explain failure modes after the fact; it suggests which system components should be modified and why.

\paragraph{Search Functor in practice: why \texttt{tracked search} matters.}
Our implementation adopts a two-level architecture consisting of a coordinator and specialized researcher sub-agents. The coordinator manages task decomposition and final synthesis, while researcher agents execute web search, focused webpage reading, and PDF parsing. The first theory-guided intervention is \texttt{tracked search}. Rather than treating search as an ephemeral tool call, every query is immediately written into a per-task evidence file containing the query string, timestamps, returned snippets, and extracted URLs. Intuitively, this turns search from a hidden jump into an observable edge in the retrieval graph. This design directly reflects Definition~\ref{def:search_morphism} and Proposition~\ref{prop:search_consistency}: if $q_2$ depends on $q_1$, the system should extend and preserve the existing retrieval state rather than overwrite it with a disconnected local lookup. In categorical terms, \texttt{tracked search} helps make the transition from $S(q_1)$ to $S(q_2)$ explicit and auditable; in practical terms, it prevents multi-hop retrieval from collapsing into a sequence of untraceable search resets. This is particularly important for Type I and Type III tasks, where failure often comes not from missing a single page, but from breaking the compositional structure of the retrieval chain.

\paragraph{Reasoning Functor in practice: why \texttt{category tools} matter.}
The second intervention concerns the Reasoning Functor $R$. Remark~\ref{rem:hallucination} characterizes hallucination as a logical morphism in $\mathcal{A}$ that lacks a valid evidentiary pre-image in $\mathcal{W}$. To reduce this failure mode, we do not allow the system to jump directly from raw snippets to a polished report. Instead, we insert explicit \texttt{category tools} between evidence collection and final synthesis. The first is a pullback-style verifier: given two noisy and partially overlapping sources, it asks which facts correspond to the same underlying object under a shared reference frame and therefore survive joint validation. Intuitively, it computes the overlap that remains after alignment, rather than the overlap that merely sounds plausible. The second is a colimit-style synthesizer: once fragments have been verified, it glues them into a single report while preserving provenance and downgrading claims that cannot be grounded. Intuitively, it answers not ``what is the smoothest overall summary,'' but ``how can locally verified pieces be assembled without inventing new edges between them.'' These two tools therefore instantiate, in operational form, the paper's theoretical account of pullbacks for multi-source consistency and colimits for report generation. They are especially aligned with the demands of Type II and Type IV tasks, where the central challenge is to reject spurious intersections and unsupported abstractions.

\paragraph{Bounded autonomy and reflective control.}
Although the implementation supports multi-agent delegation, we intentionally keep the control policy conservative. The coordinator defaults to a single researcher, increases parallelism only for clearly decomposable questions, and enforces hard caps on concurrent sub-agents and research rounds. Each search or reading action is followed by an explicit reflection step that forces the agent to assess evidence sufficiency, unresolved contradictions, and remaining information gaps before proceeding. This bounded-autonomy design is also theory-motivated: once both $S$ and $R$ are viewed as composition-preserving maps, uncontrolled branching becomes risky because it fragments the retrieval state and encourages premature, non-compositional aggregation. By making deliberation explicit and limiting unnecessary parallelism, the implementation better preserves the structural coherence required by the benchmark.

\begin{figure}[H]
    \centering
    \includegraphics[width=\textwidth]{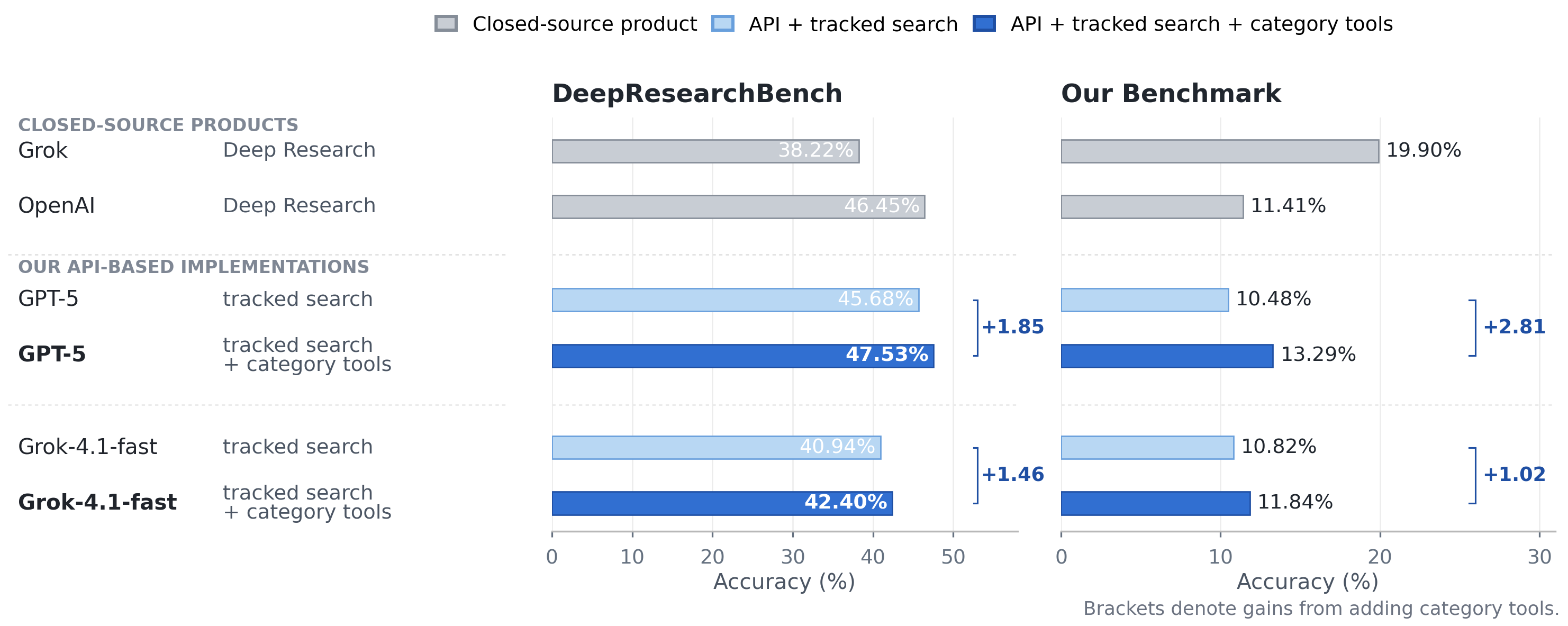}
    \caption{Performance comparison between DeepResearchBench and our benchmark. Grok Deep Research and OpenAI Deep Research are closed-source end-to-end products; the remaining systems are our API-based implementations. Brackets indicate the gains from adding category tools on top of tracked search.}
    \label{fig:deep_research_comparison}
\end{figure}

Figure~\ref{fig:deep_research_comparison} should be interpreted at two levels. First, the last four rows form a controlled ablation over our own API-based systems. The \texttt{tracked search} variants primarily intervene on the Search Functor: they make retrieval trajectories persistent, inspectable, and easier to compose across multi-hop sub-questions, but they leave higher-level synthesis relatively unconstrained. Adding \texttt{category tools} then intervenes on the Reasoning Functor by inserting explicit pullback and colimit steps between evidence collection and answer generation. The gains are consistent across both model families. For GPT-5, adding category tools improves DeepResearchBench from $45.68\%$ to $47.53\%$ and our benchmark from $10.48\%$ to $13.29\%$. For Grok-4.1-fast, the same intervention improves DeepResearchBench from $40.94\%$ to $42.40\%$ and our benchmark from $10.82\%$ to $11.84\%$. This pattern is exactly what the theory would predict: better preservation of retrieval structure strengthens $S$, while explicit categorical verification and gluing regularize $R$.

Second, the first two rows provide a comparison against closed-source frontier products. This comparison is not a pure ablation, but it is nevertheless informative. On the OpenAI side, our GPT-5-based variant with \texttt{tracked search \& category tools} slightly surpasses OpenAI Deep Research on both DeepResearchBench ($47.53\%$ vs. $46.45\%$) and our benchmark ($13.29\%$ vs. $11.41\%$). This suggests that even when the underlying model is accessed only through an API, theory-guided changes to search trace preservation and evidence synthesis can materially improve system behavior. On the Grok side, the picture is more nuanced: our Grok-4.1-fast variant with category tools exceeds Grok Deep Research on DeepResearchBench ($42.40\%$ vs. $38.22\%$), yet remains far below it on our benchmark ($11.84\%$ vs. $19.90\%$). This divergence is revealing. It suggests that broad report-generation quality and structural stress-test performance are not equivalent, and that the closed-source Grok product likely contains additional system-level mechanisms that are especially valuable for the kinds of compositional, intersection-heavy, and falsification-sensitive tasks emphasized by our benchmark.

Taken together, these results deepen the main claim of the paper. The large gap between DeepResearchBench and our benchmark shows that structurally grounded evaluation surfaces bottlenecks that are often hidden when assessment is dominated by broad answer quality alone. At the same time, the controlled improvements within our own API-based systems show that the proposed theory is not merely descriptive. It identifies concrete design levers---most clearly \texttt{tracked search} for stabilizing the Search Functor and \texttt{category tools} for regularizing the Reasoning Functor---that translate into measurable gains, while still leaving substantial headroom on the most structurally demanding tasks.

\section{Conclusion}
We address the absence of theoretically grounded evaluations for Deep Research Agents (DRAs) by introducing a category-theoretic benchmark and a corresponding structural account of deep research. Conceptualizing the research pipeline as a composition of search and reasoning maps, we move from fragmented heuristics to systematic stress-testing along four precise axes: simple chains, V-structures, substructure re-ordering, and the Yoneda Probe.

Our human-verified evaluation of $16$ frontier systems exposes the profound difficulty of formal structural stress-testing. The persistently low baseline, with the best system reaching only $19.9\%$ average performance on our benchmark, empirically validates this challenge. While systems continue to struggle on sequential traversal and multi-source intersection, the strongest deep research agents have begun to redefine topological re-ordering and, in some cases, exhibit robust ontological verification that matches the falsification priors of pure reasoning models.

At the same time, the paper's message is not purely diagnostic. The same theory that explains these failure modes also suggests concrete system improvements. In our controlled API-based variants, preserving retrieval traces and inserting pullback- and colimit-inspired operators yields measurable gains, showing that the framework can guide design as well as evaluation. Nevertheless, the remaining gap between broad-report benchmarks and our structurally grounded benchmark, together with the large variance between closed-source products and API-based implementations, indicates that generalized structural mastery remains unsolved. We hope this framework catalyzes a shift from heuristic pipeline tuning toward a more principled understanding of deep research architectures and their compositional limits.

 \clearpage

\bibliography{ref}
\bibliographystyle{iclr2026_conference}

\clearpage
\appendix
\begin{center}
    \huge \textbf{Appendix}
\end{center}

\section{Detailed Proofs}\label{proof}
\begin{theorem}[Categorical Validity of Agentic State Spaces]
    Let $\mathcal{Q}$, $\mathcal{W}$, $\mathbf{Sub}(\mathcal{W})$, and $\mathcal{A}$ be the agentic state spaces equipped with their respective objects and relations. If we define the morphisms in $\mathcal{Q}$, $\mathcal{W}$, and $\mathcal{A}$ as above, then all four spaces strictly satisfy the Eilenberg-Mac Lane axioms and thus constitute mathematically valid categories.
    \end{theorem}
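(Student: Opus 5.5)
The plan is to check, for each of the four spaces, the two Eilenberg--Mac Lane requirements: existence of identities, and an associative composition law satisfying the unit laws. The informal relations as described ("$q_1$ depends on $q_2$", "$w_1$ links to $w_2$", "$a_1$ implies $a_2$") are not closed under composition and need not be reflexive. So the first step is to make the intended morphisms precise. I will read each hom-set as the thin, or preorder, category generated by the given relation. Concretely, for $\mathcal{Q}$, $\mathcal{W}$ and $\mathcal{A}$, I set $\mathrm{Hom}(x,y)$ to be a singleton if $y$ is reachable from $x$ by a finite, possibly empty, chain of basic relations (dependency, hyperlink or evidentiary support, logical entailment), and empty otherwise. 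This is exactly the reflexive--transitive closure of the basic relation.

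With this reading, the verification for $\mathcal{Q}$, $\mathcal{W}$ and $\mathcal{A}$ is uniform. Identity is the empty chain. Composition is the concatenation of chains, which is well defined because reachability is transitive. Associativity and the unit laws hold automatically, since any two parallel morphisms in a thin category are equal. For $\mathcal{A}$ one can argue even more directly: entailment is already reflexive ($a\implies a$) and transitive (modus ponens / hypothetical syllogism), so no closure is needed. For $\mathcal{W}$, I will remark that hyperlinks alone are not transitive, which is why the closure, i.e. paths in the web graph, is essential. As an alternative, I could take morphisms to be the paths themselves, giving the free category on the graph. In that case associativity follows from associativity of path concatenation, and the empty path serves as identity. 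I would mention both options and say the result holds for either.

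For $\mathbf{Sub}(\mathcal{W})$, I will take objects to be subgraphs $G\subseteq\mathcal{W}$ and morphisms to be inclusions $G_1\hookrightarrow G_2$. This is the poset of subgraphs under $\subseteq$. Inclusion is reflexive and transitive, and the composite of two inclusions is again an inclusion, namely composition of functions restricted from the identity on $\mathrm{Ob}(\mathcal{W})$. Associativity is inherited from function composition, and the identity inclusion is a two-sided unit. I will note that a union or intersection of subgraphs is again a subgraph, but this is not needed for the axioms.

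The main obstacle is not any calculation but the modelling step. The definitions in the text specify relations rather than hom-sets closed under composition, so I must justify that replacing them by their reflexive--transitive closure (or free path category) is the faithful reading, especially for $\mathcal{W}$ and $\mathcal{Q}$. For $\mathcal{Q}$, the word ``strictly'' in the dependency definition also forbids self-loops, so identities have to be added formally. Once that convention is fixed, every remaining check reduces to standard facts about preorders and function composition.
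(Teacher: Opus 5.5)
Your proposal is correct and follows essentially the same route as the paper: a thin (preorder) category for $\mathcal{Q}$, the free path category on the hyperlink/premise graph for $\mathcal{W}$ (which you offer as an alternative and which is the paper's choice), the inclusion poset for $\mathbf{Sub}(\mathcal{W})$, and reflexivity plus transitivity of entailment for $\mathcal{A}$. The only real divergence is that you make $\mathcal{A}$ thin, so associativity is automatic, whereas the paper takes chained proofs as morphisms and simply asserts associativity as ``invariance to the order of proof construction'' without justifying it. You are also more explicit than the paper that identities and composites come from a closure convention rather than from the original relations.
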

    
    \begin{proof}
    To rigorously prove that a given structure forms a category, we must verify the existence of identity morphisms for all objects, the closure under morphism composition, and the associativity of said composition. 
    
    \paragraph{1. The Intent Category $\mathcal{Q}$ (Query Space):}
    We model $\mathcal{Q}$ as a thin category (a preorder) where at most one morphism exists between any two queries.
    \begin{itemize}
        \item \textbf{Identity:} For any query $q \in \mathrm{Ob}(\mathcal{Q})$, resolving $q$ trivially depends on itself. Thus, we assign the identity morphism $\mathrm{id}_q: q \to q$.
        \item \textbf{Composition:} Let $q_1, q_2, q_3 \in \mathrm{Ob}(\mathcal{Q})$. If $f: q_1 \xrightarrow{\text{dep}} q_2$ (resolving $q_2$ depends on $q_1$) and $g: q_2 \xrightarrow{\text{dep}} q_3$ (resolving $q_3$ depends on $q_2$), then by the transitive nature of task execution, resolving $q_3$ inherently necessitates resolving $q_1$. Thus, the composite morphism $g \circ f: q_1 \xrightarrow{\text{dep}} q_3$ universally exists in $\mathcal{Q}$.
        \item \textbf{Associativity:} Since $\mathcal{Q}$ is a preorder defined by logical dependency, any sequence of dependencies trivially satisfies $h \circ (g \circ f) = (h \circ g) \circ f$.
    \end{itemize}
    
    \paragraph{2. The Knowledge Category $\mathcal{W}$ (Web Space):}
    To ensure compositional closure, we define the morphisms of $\mathcal{W}$ as the paths (the free category construction) generated by the directed graph of hyperlinks and direct causal premises.
    \begin{itemize}
        \item \textbf{Identity:} For any web entity $w \in \mathrm{Ob}(\mathcal{W})$, it trivially serves as a premise for its own semantic content. This constitutes the empty path, serving as the identity morphism $\mathrm{id}_w: w \to w$.
        \item \textbf{Composition:} If there exists an evidentiary path $f: w_1 \xrightarrow{\text{ref}} w_2$ and another path $g: w_2 \xrightarrow{\text{ref}} w_3$, the concatenation of these paths forms a valid, continuous evidentiary chain from $w_1$ to $w_3$. This concatenation forms the composite morphism $g \circ f \in \mathrm{Hom}_{\mathcal{W}}(w_1, w_3)$.
        \item \textbf{Associativity:} Path concatenation is strictly associative; concatenating paths $f$, $g$, and $h$ in sequence yields the exact same evidentiary chain regardless of grouping, strictly satisfying $h \circ (g \circ f) = (h \circ g) \circ f$.
    \end{itemize}
    
    \paragraph{3. The Retrieval Subcategory $\mathbf{Sub}(\mathcal{W})$:}
    \begin{itemize}
        \item \textbf{Identity:} For any retrieved subgraph $G \in \mathrm{Ob}(\mathbf{Sub}(\mathcal{W}))$, the subset relation is reflexive ($G \subseteq G$). This defines the identity inclusion map $\mathrm{id}_G: G \hookrightarrow G$.
        \item \textbf{Composition:} Let $G_1, G_2, G_3$ be subgraphs. If $i: G_1 \hookrightarrow G_2$ and $j: G_2 \hookrightarrow G_3$ are inclusion mappings, standard set theory guarantees $G_1 \subseteq G_3$. The function composition of two inclusions is inherently an inclusion mapping $j \circ i: G_1 \hookrightarrow G_3$, ensuring closure.
        \item \textbf{Associativity:} Because morphisms here are literal functions (inclusion maps) acting on sets, their composition inherits the strict associativity of standard function composition.
    \end{itemize}
    
    \paragraph{4. The Reasoning Category $\mathcal{A}$ (Logic Space):}
    We model $\mathcal{A}$ analogously to a deductive system (a propositional category).
    \begin{itemize}
        \item \textbf{Identity:} For any proposition $a \in \mathrm{Ob}(\mathcal{A})$, the implication $a \implies a$ is a fundamental logical tautology (the law of identity). This serves as the identity morphism $\mathrm{id}_a: a \to a$.
        \item \textbf{Composition:} If we have logical entailments $f: a_1 \implies a_2$ and $g: a_2 \implies a_3$, the rule of hypothetical syllogism dictates that $a_1$ must logically entail $a_3$. The proof of this chained entailment serves as the composite morphism $g \circ f: a_1 \implies a_3$.
        \item \textbf{Associativity:} In formal deductive logic, the sequential application of modus ponens or hypothetical syllogism over a chain of propositions is invariant to the order of proof construction, yielding $h \circ (g \circ f) = (h \circ g) \circ f$.
    \end{itemize}
    
    Since all four spaces strictly satisfy the requirements for objects, hom-sets, identities, composition, and associativity, they are categorically valid.
    \end{proof}

\begin{proposition}[Chain-of-Search Consistency] 
An ideal deep research agent preserves compositionality during multi-hop retrieval (deep search). For a sequence of sub-task dependencies $q_1 \xrightarrow{f} q_2 \xrightarrow{g} q_3$, it strictly holds that $S(g \circ f) = S(g) \circ S(f)$.
\end{proposition}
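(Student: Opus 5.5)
The approach is to reduce Proposition~\ref{prop:search_consistency} to the composition axiom that an ideal search functor satisfies by definition. The key input is the concrete structure of $\mathbf{Sub}(\mathcal{W})$ established in Theorem~\ref{thm:categorical_validity}. There, morphisms are inclusion maps between retrieved subgraphs, and composition is ordinary function composition. Together with Definition~\ref{def:search_morphism} and Remark~\ref{rem:search-dependency}, this means each dependency $f: q_1 \to q_2$ is sent to an inclusion $S(f): S(q_1) \hookrightarrow S(q_2)$.

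The steps, in order, are as follows.

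First, I use that $\mathcal{Q}$ is a thin category (a preorder), as in the proof of Theorem~\ref{thm:categorical_validity}. So $g \circ f$ is the unique morphism $q_1 \to q_3$. By Definition~\ref{def:search_morphism}, $S(g\circ f)$ is then a morphism $S(q_1) \to S(q_3)$ in $\mathbf{Sub}(\mathcal{W})$.

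Second, I observe that $\mathbf{Sub}(\mathcal{W})$ is itself thin when realised as a poset of concrete subgraphs under inclusion. Between $S(q_1)$ and $S(q_3)$ there is at most one inclusion map: the map sending each node and edge to itself.

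Third, I note that $S(g)\circ S(f)$ is a composite of inclusions $S(q_1)\hookrightarrow S(q_2)\hookrightarrow S(q_3)$. By the composition clause of Theorem~\ref{thm:categorical_validity}, it is again an inclusion $S(q_1)\hookrightarrow S(q_3)$.

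Uniqueness of morphisms in a thin category then forces $S(g\circ f) = S(g)\circ S(f)$. Identity preservation, $S(\mathrm{id}_q)=\mathrm{id}_{S(q)}$, follows by the same uniqueness argument.

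The main obstacle is interpretive rather than computational. The statement is only meaningful once ``ideal'' is pinned down, and one must check that the morphism assignment really lands in inclusions rather than in arbitrary ``morphism restrictions.'' The paper's description of $\mathbf{Sub}(\mathcal{W})$ mentions both, and restrictions could break thinness.

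I would first try to resolve this by restricting to the poset realisation singled out in Remark~\ref{rem:search-dependency}, where the condition reduces to $S(q_1)\subseteq S(q_2)$, and running the argument above. If morphism restrictions must be allowed, I would fall back on taking functoriality as part of the definition of an ideal agent. The proposition is then the composition axiom of a functor, read as the statement that retrieval provenance accumulated along $q_1\to q_2$ is carried forward intact to $q_3$.
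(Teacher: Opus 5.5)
Your proof is correct, but it takes a different route from the paper's. The paper never derives the equation. It reads $S(g\circ f)$ as the evidence chain produced by running the multi-hop query end-to-end, and $S(g)\circ S(f)$ as the evidence assembled by explicit decomposition. It then stipulates that an ideal agent makes the two agree, so functoriality is simply part of what ``ideal'' means; this is your fallback. The rest of its proof interprets a violation as \emph{Material Loss} caused by long-horizon context truncation.

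Your primary route instead obtains the equation as a consequence. Definition~\ref{def:search_morphism} guarantees that $S(f)$, $S(g)$ and $S(g\circ f)$ exist as inclusions in the poset realisation of $\mathbf{Sub}(\mathcal{W})$. Thinness of that codomain then forces any two parallel morphisms to coincide. Thinness of $\mathcal{Q}$ is not actually needed; only thinness of the target matters. This gives a genuine derivation from the paper's own definitions, with identity preservation for free.

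It also shows something the paper does not acknowledge: in that model the composition equation cannot fail once the morphisms exist. So the paper's diagnostic reading, in which dropping intermediate nodes makes $S(g\circ f)\neq S(g)\circ S(f)$, cannot be expressed there. Material loss would instead appear as a missing inclusion (e.g.\ $S(q_2)\not\subseteq S(q_3)$). That is a failure of Definition~\ref{def:search_morphism} and Remark~\ref{rem:search-dependency}, not of compositionality.

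The paper's definitional reading keeps the equation substantive. For that it needs non-thin morphisms that can actually differ, such as explicit evidence chains. The price is that the proposition becomes an axiom imposed on ideal agents rather than a theorem.
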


\begin{proof}
This proposition captures the core contradiction of deep research: the tension between end-to-end execution and step-by-step decomposition. The left-hand side, $S(g \circ f)$, represents the evidence chain obtained by directly executing a multi-hop query. The right-hand side, $S(g) \circ S(f)$, represents the evidence pieced together by explicitly decomposing the compound problem into sub-problems. 

An ideal agent requires these two approaches to yield consistent evidence graphs. However, a prominent failure case in LLMs is \emph{Material Loss} during long-chain search. When processing an extended research trajectory $S(g \circ f)$, the agent often breaks the chain by dropping intermediate nodes, whereas the step-by-step approach $S(g) \circ S(f)$ remains complete. If intermediate routing evidence is lost, the equality is broken, demonstrating that long-horizon context truncation is mathematically equivalent to the violation of search compositionality.
\end{proof}

\begin{proposition}[Functoriality and Transitivity of Deduction]
By the composition axiom of functors, an ideal reasoning agent inherently preserves deductive transitivity. For any composable evidentiary morphisms $w_1 \xrightarrow{f} w_2 \xrightarrow{g} w_3$ in $\mathcal{W}$, it strictly holds that $R(g \circ f) = R(g) \circ R(f)$.
\end{proposition}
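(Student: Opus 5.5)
The plan is to read ``ideal reasoning agent'' as meaning exactly that $R:\mathcal{W}\to\mathcal{A}$ is a functor in the sense of the Preliminary definition. The composition law will then follow from the functor axioms. The only real work is to check that both sides of the equation are well-defined morphisms in the categories established by Theorem~\ref{thm:categorical_validity}.

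\textbf{Step 1: the left-hand side is a well-defined input.} Theorem~\ref{thm:categorical_validity} models $\mathcal{W}$ as the free category on the graph of hyperlinks and causal premises. So for $f\in\mathrm{Hom}_{\mathcal{W}}(w_1,w_2)$ and $g\in\mathrm{Hom}_{\mathcal{W}}(w_2,w_3)$, the concatenated path $g\circ f$ lies in $\mathrm{Hom}_{\mathcal{W}}(w_1,w_3)$. Definition~\ref{def:reasoning_morphism} then assigns it a morphism $R(g\circ f)\in\mathrm{Hom}_{\mathcal{A}}(R(w_1),R(w_3))$.

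\textbf{Step 2: the right-hand side is well defined.} The same definition gives $R(f):R(w_1)\to R(w_2)$ and $R(g):R(w_2)\to R(w_3)$. Their codomain and domain match, so the composite $R(g)\circ R(f)$ exists in $\mathcal{A}$. This composite is the hypothetical-syllogism entailment $R(w_1)\implies R(w_3)$ from the composition clause for $\mathcal{A}$ in Theorem~\ref{thm:categorical_validity}.

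\textbf{Step 3: the two sides agree.} Since $R$ is a functor, the composition axiom $F(g\circ f)=F(g)\circ F(f)$ applies with $F=R$, which gives the stated equality. This equality is the precise sense in which deductive transitivity is preserved. As a consistency check, there is a second route in the preorder case where $\mathcal{A}$ is thin. Both sides are then parallel morphisms $R(w_1)\to R(w_3)$, so they coincide automatically once $\mathrm{Hom}_{\mathcal{A}}(R(w_1),R(w_3))$ is known to be nonempty.

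\textbf{Main obstacle.} The hard part is not the algebra but justifying that an ideal agent's $R$ is genuinely a functor rather than merely a map on generating edges. The way I would handle this uses the fact that $\mathcal{W}$ is free on a graph. It then suffices to specify $R$ on the generating hyperlinks and premises, subject to Definition~\ref{def:reasoning_morphism}. By the universal property of the free category, this assignment extends uniquely to paths by setting $R(e_n\cdots e_1)=R(e_n)\circ\cdots\circ R(e_1)$.

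With this extension, compositionality holds by construction. Associativity in $\mathcal{A}$, from Theorem~\ref{thm:categorical_validity}, ensures the extension is independent of how a path is bracketed. I would conclude by noting the contrapositive reading: a failure of this equality would mean the agent's chained conclusion is not the composite of its justified steps. By Remark~\ref{rem:hallucination}, such a failure signals a fabricated morphism lacking an evidentiary pre-image.
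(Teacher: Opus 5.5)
Your proposal is correct and follows the paper's proof. The paper likewise notes that $g\circ f$ exists in $\mathcal{W}$ by the category axioms and then invokes the composition axiom of the functor $R$; your well-definedness checks and the free-category extension from generating edges elaborate the same one-line argument without changing it.

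One caution on your closing remark: an end-to-end assertion $h\neq R(g)\circ R(f)$ need not lack an evidentiary pre-image, since it could equal $R(p)$ for some other path $p\colon w_1\to w_3$. For that reason the paper diagnoses such a failure as chain-of-thought degradation, not as hallucination in the sense of Remark~\ref{rem:hallucination}.
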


\begin{proof}
The forward claim is a direct consequence of the functoriality axiom: a valid functor must rigorously preserve morphism composition. By definition of a category, the evidentiary composite $g \circ f \in \mathrm{Hom}_{\mathcal{W}}(w_1, w_3)$ universally exists. 

Evaluating the converse direction serves as a precise diagnostic for reasoning failure. Suppose the agent successfully deduces the local, step-by-step implications $R(f)$ and $R(g)$, but synthesizes an end-to-end assertion $h_{\mathrm{agent}} \in \mathrm{Hom}_{\mathcal{A}}(R(w_1), R(w_3))$ spanning the full context. If $h_{\mathrm{agent}} \neq R(g) \circ R(f)$, a specific categorical breakdown has occurred: \emph{Chain-of-Thought (CoT) Degradation}. Because the composite $g \circ f$ exists and dictates the correct transitive logic $R(g \circ f)$, asserting an inequality proves that the agent failed to faithfully propagate the intermediate causal constraints ($w_2$) into its composed deduction. The inequality exactly identifies the locus where the agent's long-context logical synthesis becomes mathematically unsound.
\end{proof}

\section{The Search and Reasoning Rubrics}\label{S and R}

A detailed rubrics can be found in Table \ref{tab:task_breakdown}. 

\begin{table}[htbp] 
    \centering
    \caption{Detailed breakdown of the four-fold categorical assessment tasks, highlighting the distinct performance bottlenecks between Search ($S$) and Reasoning ($R$).}
    \label{tab:task_breakdown}
    \renewcommand{\arraystretch}{0.6} 
    \begin{tabular}{@{} p{1.7cm} c c c p{6.5cm} @{}}
        \toprule
        \textbf{Category} & \textbf{Task ID} & \textbf{$S$ Score} & \textbf{$R$ Score} & \textbf{Primary Bottleneck \& Rationale} \\
        \midrule
        
        \textbf{Type I:} & Task 1 & 90\% & 10\% & \textbf{Adversarial Retrieval:} Overcoming lexical collision to find off-domain fragments; logic is trivial counting once the node is reached. \\
        Linear Chain & Task 2 & 80\% & 20\% & \textbf{Schema Awareness:} High reliance on querying specific vertical databases; reasoning is limited to simple temporal matching. \\
        & Task 3 & 85\% & 15\% & \textbf{Long-Tail Search:} Tracing obscure numerical ranks to break index shortcuts; reasoning is basic historical traceback. \\
        \midrule
        
        \textbf{Type II:} & Task 4 & 50\% & 50\% & \textbf{Balanced Synthesis:} Equal challenge in retrieving disjoint, long-tail reports ($S$) and aligning dynamic schemas to solve algebraic deltas ($R$). \\
        Pullback \quad (V-Struct) & Task 5 & 30\% & 70\% & \textbf{Parametric Prior Override:} Retrieval is standard; extreme $R$ pressure to reject pre-training biases and resolve identities strictly via rigid IDs. \\
         & Task 6 & 30\% & 70\% & \textbf{Meta-Cognitive Triggering:} Retrieving baseline data is trivial; the bottleneck is the unprompted realization and mathematical execution of latent regulatory constraints. \\
        & Task 7 & 40\% & 60\% & \textbf{Probabilistic Join:} Requires crossing domains to join fuzzy identifiers and prioritizing systemic safety constraints over instruction following. \\
        & Task 8 & 40\% & 60\% & \textbf{Unstructured SAT Solving:} $S$ yields historical versions; $R$ must parse narrative text into a strict dependency graph to detect mutual exclusivity. \\
        \midrule
        
        \textbf{Type III:} & Task 9 & 60\% & 40\% & \textbf{Dynamic Indexing:} Bypassing search index limits to find dynamic ranks ($S$), then shifting temporal states to compute historical attributes ($R$). \\
        Substructure Re-ordering & Task 10 & 30\% & 70\% & \textbf{Algorithmic Emulation:} Retrieving macro-tables is easy; the failure point is in-context float-point interval filtering and strict descending sorting. \\
        & Task 11 & 20\% & 80\% & \textbf{Map-Reduce Cognitive Load:} Massive working memory pressure to maintain 20 distinct entities and compute macroscopic statistical invariants without omission. \\
        & Task 12 & 40\% & 60\% & \textbf{Orthogonal Dimension Shifting:} Switching from cross-sectional retrieval (ranking) to longitudinal computation (time-series volatility). \\
        \midrule
        
        \textbf{Type IV:} & Task 13 & 30\% & 70\% & \textbf{Peer-Pressure Resistance:} Finding reports is basic; $R$ must resist output symmetry bias and explicitly confirm null-fields rather than hallucinating. \\
        Yoneda Probing & Task 14 & 10\% & 90\% & \textbf{Yoneda Falsification:} $S$ returns a null set $\emptyset$; $R$ must reject authority bias (e.g., Goldman Sachs) and use the null morphisms to prove non-existence. \\
        & Task 15 & 20\% & 80\% & \textbf{Breaking Relational Presuppositions:} Must break the prompt's explicit instruction to compare, identifying that the latent feature only exists unilaterally. \\
        \bottomrule
    \end{tabular}
\end{table}

\section{More Detailed Experiment results}\label{detailed eval}

Tables \ref{tab:type1_type2_details} and \ref{tab:type3_type4_details} show the per-task scores among each type, it can provide a more detailed view of the behaviour.

\begin{table}[!ht]
    \centering
    \caption{Detailed performance comparison on specific tasks (Type I and Type II). Scores are presented as percentages (\%).}
    \label{tab:type1_type2_details}
    \resizebox{\textwidth}{!}{
    \begin{tabular}{l ccc ccccc}
        \toprule
        \multirow{2}{*}{\textbf{Model}} & \multicolumn{3}{c}{\textbf{Type I}} & \multicolumn{5}{c}{\textbf{Type II}} \\
        \cmidrule(lr){2-4} \cmidrule(lr){5-9}
        & \textbf{Task 1} & \textbf{Task 2} & \textbf{Task 3} & \textbf{Task 4} & \textbf{Task 5} & \textbf{Task 6} & \textbf{Task 7} & \textbf{Task 8} \\
        \midrule
        \multicolumn{9}{l}{\text{\textbf{Reasoning Models}}} \\
        Qwen3 w/Reasoning & 0.0 & 0.0 & \textbf{5.0} & 0.0 & 0.0 & \textbf{10.0} & 10.0 & 0.0 \\
        GPT-5 w/Reasoning & 0.0 & 0.0 & 0.0 & 2.2 & 0.0 & 0.0 & \textbf{20.0} & 0.0 \\
        Claude-4.5-Sonnet w/ Reasoning & 4.0 & 0.0 & 0.0 & 2.2 & 0.0 & 0.0 & 0.0 & 0.0 \\
        Doubao w/Reasoning & 4.0 & 0.0 & 0.0 & 0.0 & \textbf{8.3} & 0.0 & 0.0 & 0.0 \\
        Gemini-3-Pro w/Reasoning & 0.0 & 0.0 & 0.0 & 0.0 & \textbf{8.3} & 0.0 & 0.0 & 12.0 \\
        \midrule
        \multicolumn{9}{l}{\text{\textbf{Search-Augmented Models}}} \\
        Doubao w/Search & 0.0 & \textbf{5.0} & 0.0 & 0.0 & \textbf{8.3} & 0.0 & 0.0 & 0.0 \\
        Qwen3 w/Search & 0.0 & 0.0 & 0.0 & 0.0 & 0.0 & 0.0 & \textbf{20.0} & 0.0 \\
        Perplexity-Sonar-Pro w/Search & 0.0 & 0.0 & 0.0 & 0.0 & 0.0 & 0.0 & 0.0 & 0.0 \\
        GPT-5 w/Search & 0.0 & \textbf{5.0} & \textbf{5.0} & 0.0 & 0.0 & 0.0 & 0.0 & 0.0 \\
        Claude-4.6-Sonnet w/Search & 0.0 & 0.0 & 0.0 & 0.0 & \textbf{8.3} & 0.0 & 0.0 & 0.0 \\
        \midrule
        \multicolumn{9}{l}{\text{\textbf{Deep Research Models}}} \\
        Doubao Deep Research & 8.0 & 0.0 & 0.0 & 0.0 & 0.0 & 0.0 & 0.0 & 0.0 \\
        Qwen Deep Research & 0.0 & 0.0 & 0.0 & 0.0 & \textbf{8.3} & 0.0 & 0.0 & 0.0 \\
        Grok Deep Research & 4.0 & 0.0 & 0.0 & 0.0 & \textbf{8.3} & 0.0 & 4.0 & 0.0 \\
        OpenAI Deep Research & 12.0 & 0.0 & 0.0 & \textbf{11.1} & \textbf{8.3} & \textbf{10.0} & 0.0 & 0.0 \\
        Perplexity Deep Research & \textbf{24.0} & \textbf{5.0} & 0.0 & 6.7 & \textbf{8.3} & 0.0 & 0.0 & 0.0 \\
        Gemini-3-Pro deep Research & 8.0 & 0.0 & 0.0 & 0.0 & 0.0 & 0.0 & 0.0 & \textbf{16.0} \\
        \bottomrule
    \end{tabular}
    }
\end{table}

\begin{table}[!ht]
    \centering
    \caption{Detailed performance comparison on specific tasks (Type III and Type IV). Scores are presented as percentages (\%).}
    \label{tab:type3_type4_details}
    \resizebox{\textwidth}{!}{
    \begin{tabular}{l cccc ccc}
        \toprule
        \multirow{2}{*}{\textbf{Model}} & \multicolumn{4}{c}{\textbf{Type III}} & \multicolumn{3}{c}{\textbf{Type IV}} \\
        \cmidrule(lr){2-5} \cmidrule(lr){6-8}
        & \textbf{Task 9} & \textbf{Task 10} & \textbf{Task 11} & \textbf{Task 12} & \textbf{Task 13} & \textbf{Task 14} & \textbf{Task 15} \\
        \midrule
        \multicolumn{8}{l}{\text{\textbf{Reasoning Models}}} \\
        Qwen3 w/Reasoning & 0.0 & 0.0 & 0.0 & 0.0 & 0.0 & \textbf{95.7} & 10.0 \\
        GPT-5 w/Reasoning & 0.0 & 0.0 & 0.0 & 0.0 & 0.0 & 60.9 & 0.0 \\
        Claude-4.5-Sonnet w/ Reasoning & 5.0 & 0.0 & 0.0 & 0.0 & 0.0 & 47.8 & 0.0 \\
        Doubao w/Reasoning & 15.0 & 0.0 & 0.0 & 0.0 & 0.0 & 34.8 & 0.0 \\
        Gemini-3-Pro w/Reasoning & 0.0 & 0.0 & 0.0 & 0.0 & 0.0 & 30.4 & 0.0 \\
        \midrule
        \multicolumn{8}{l}{\text{\textbf{Search-Augmented Models}}} \\
        Doubao w/Search & 15.0 & 0.0 & \textbf{25.0} & 0.0 & 0.0 & 4.3 & 40.0 \\
        Qwen3 w/Search & 5.0 & 0.0 & 0.0 & 0.0 & 0.0 & 52.2 & 10.0 \\
        Perplexity-Sonar-Pro w/Search & 0.0 & 0.0 & 0.0 & 0.0 & 0.0 & 52.2 & 10.0 \\
        GPT-5 w/Search & 0.0 & 0.0 & 7.5 & 10.0 & 0.0 & 13.0 & 20.0 \\
        Claude-4.6-Sonnet w/Search & 15.0 & 0.0 & 0.0 & 0.0 & 0.0 & 47.8 & 30.0 \\
        \midrule
        \multicolumn{8}{l}{\text{\textbf{Deep Research Models}}} \\
        Doubao Deep Research & 5.0 & 0.0 & 10.0 & 0.0 & 0.0 & 21.7 & 20.0 \\
        Qwen Deep Research & 0.0 & 0.0 & 0.0 & 0.0 & 0.0 & 8.7 & 0.0 \\
        Grok Deep Research & 5.0 & 0.0 & \textbf{25.0} & 10.0 & 0.0 & 82.6 & 40.0 \\
        OpenAI Deep Research & 10.0 & 0.0 & 17.5 & \textbf{20.0} & 0.0 & 21.7 & \textbf{50.0} \\
        Perplexity Deep Research & \textbf{45.0} & 0.0 & 17.5 & \textbf{20.0} & 0.0 & 47.8 & \textbf{50.0} \\
        Gemini-3-Pro deep Research & 15.0 & \textbf{10.0} & 0.0 & 0.0 & 0.0 & 8.7 & 10.0 \\
        \bottomrule
    \end{tabular}
    }
\end{table}

\section{Some Selected Examples}

\begin{resultbox}[Example 1]
\textbf{Input:} \\

A document published on a certain website contains the following text: 'For a shell script to work, we must set the correct permissions to the file, if the permissions are not set a "Permission Denied" error will be thrown when you run the script for the first time.' How many times did the publisher of this document contribute to GitHub on March 1, 2025, in UTC time?

\tcbline 

\textbf{Search and Reasoning:} \\

\subsection*{Phase 1: Sourcing the Text and Identifying the Author}
\begin{itemize}
    \item \textbf{Initial Keyword Searches:} The process began by searching the exact quote regarding shell script permissions. Initial web results yielded general information on the ``Permission Denied'' error, but no exact matches for the provided text.
    \item \textbf{Search Strategy Refinement:} To improve accuracy, the queries were adjusted by breaking the text into specific combinations and using exact-match quotation marks around key phrases. 
    \item \textbf{Platform-Specific Targeting:} After general searches and reviewing educational sites like KodeKloud and Dispersed Net failed to produce the exact sentence, the search was narrowed specifically to GitHub using site-specific operators (e.g., \texttt{site:github.com}).
    \item \textbf{Successful Identification:} By searching a highly specific fragment of the quote, the exact text was located in a beginner shell scripting course repository. The committer and author of the document was identified as the GitHub user \textbf{Tej-Singh-Rana}.
\end{itemize}

\subsection*{Phase 2: Retrieving GitHub Contribution Data}
\begin{itemize}
    \item \textbf{Profile Investigation:} The objective shifted to finding Tej-Singh-Rana's GitHub contribution count specifically for \textbf{March 1, 2025}.
    \item \textbf{Targeted Data Extraction:} Because the main GitHub profile overview does not display specific historical daily counts upfront, a date-specific URL was constructed (using parameters like \texttt{from=2025-03-01\&to=2025-03-01}) to fetch the exact data for that day.
    \item \textbf{Final Tally:} The analysis of that specific date revealed the following activity:
    \begin{itemize}
        \item \textbf{4} commits
        \item \textbf{2} pull requests created
        \item \textbf{Total:} \textbf{6} contributions on March 1, 2025.
    \end{itemize}
\end{itemize}

\tcbline 

\textbf{Final Output:} \\
To determine the number of GitHub contributions made by the document's publisher on March 1, 2025 (UTC), the following analytical steps were taken:

\subsection*{Step 1: Locate the Document Source and Publisher}
The English text provided exactly matches the document \texttt{03-Creating-\allowbreak{}your-\allowbreak{}first-\allowbreak{}shell-\allowbreak{}script.md} within the GitHub repository \texttt{resource-\allowbreak{}for-\allowbreak{}devops/\allowbreak{}shell-\allowbreak{}scripting-\allowbreak{}for-\allowbreak{}beginners-\allowbreak{}course-\allowbreak{}by-\allowbreak{}kodekloud}. The core contributor and publisher of this document is Tej Singh Rana (GitHub username: \texttt{Tej-Singh-Rana}).

\subsection*{Step 2: Query Tej-Singh-Rana's GitHub Contributions on March 1, 2025}
By accessing Tej Singh Rana's GitHub profile and restricting the date range strictly to 2025-03-01, the following specific contribution records were retrieved:
\begin{itemize}
    \item \textbf{Commits:} Made 4 commits in the \texttt{kodekloudhub/\allowbreak{}example-\allowbreak{}voting-\allowbreak{}app-\allowbreak{}kubernetes} repository.
    \item \textbf{Pull Requests (PRs):}
    \begin{itemize}
        \item Created 1 PR titled ``Consolidated Manifests: Removed from Repo \& Linked to Another Repository!!!'', which received 1 comment.
        \item Opened 1 additional PR titled ``fixed the typo'', which was successfully merged.
    \end{itemize}
\end{itemize}

\subsection*{Step 3: Calculate Total Contributions}
GitHub ``contributions'' typically encompass core actions such as making commits and opening PRs. Based on the retrieved records, Tej Singh Rana's GitHub contribution count on March 1, 2025 (UTC) is calculated as follows:
\begin{center}
    4 commits + 2 PR actions = \textbf{6 total contributions}.
\end{center}

\vspace{0.5cm}
\noindent \textbf{Answer: 6}
\end{resultbox}

\begin{resultbox}[Example 2]
\textbf{Input:} \\

Below is the reflection written by a doctoral student about their work. It seems like a very interesting project. Considering his personal contribution to this work, he plans to use it to compete for a doctoral dissertation award. Please analyze the novelty of this work.

"Constructed a compliance-oriented hybrid storage knowledge-enhanced retrieval system, specifically designed to address the coexisting challenges of complex permission hierarchies (SQL) and document fragmentation (NoSQL) within enterprise knowledge bases; the system utilizes a strongly-typed configuration center to achieve full-pipeline environment isolation from data ingestion to inference services, and through graph-based interfaces, realizes the dynamic pruning and reorganization of unstructured vector chunks and structured business logic. This successfully solved the context pollution issue in RAG systems when handling multi-tenant sensitive data, increasing the business compliance and recall accuracy of generated content by 5\%. This work spanned a relatively long period and was entirely built upon the library versions installed by a senior student previously. I remember it was during the Spring Festival in 2023 when Senior Student Wang first used pip to install the latest beanie, along with sqlmodel, fastapi, and strawberry-graphql, but he later got busy with other things and the project was shelved. I took over this project in September 2024. On the foundation laid by Senior Student Wang, I immediately used pip to install the latest version of pydantic-settings. Later, combining inspiration from the latest papers, I improved the algorithm and upgraded to the latest Python version around Christmas 2024. Only then was I finally able to run the experiments successfully and obtain the desired results."

\tcbline 

\textbf{Reasoning:} \\

\begin{itemize}
    \item \textbf{Assessing Dissertation Novelty:} I've begun dissecting the RAG system's compliance focus and the student's personal contribution.
    
    \item \textbf{Analyzing Timeline Discrepancies:} I've pinpointed a potential dependency conflict between the initial and later library installations concerning Pydantic versions.
    
    \item \textbf{Evaluating Dependency Inconsistencies:} I've determined the narrative's timeline is invalid due to incompatible library versions and Python upgrades, likely a test scenario.
    
    \item \textbf{Addressing Technical Impossibility:} I've crafted a professional, structured response, highlighting a critical Pydantic dependency flaw and low novelty.
    
    \item \textbf{Evaluating Technical Details:} I've outlined a step-by-step constraint check and pinpointed critical Pydantic version discrepancies in the narrative's setup.
\end{itemize}

\tcbline 

\textbf{Final Output:} \\
While the project aims to solve a highly relevant and complex problem in enterprise Retrieval-Augmented Generation (RAG) systems, the narrative presented contains a fatal technical contradiction that completely undermines its credibility. If a student were to present this exact reflection for a doctoral dissertation award, it would likely be rejected due to both technical impossibility and a lack of true academic novelty.

Here is a breakdown of why this reflection is problematic.

\subsection*{1. The Technical Execution Flaw (The Dependency Trap)}
The timeline of the student's personal contribution describes a scenario that is practically impossible without entirely rewriting the codebase or resolving massive dependency conflicts, which contradicts the claim that the work was ``entirely built upon the library versions installed by a senior student previously.''

\begin{itemize}
    \item \textbf{The Pydantic V1 vs. V2 Conflict:} In early 2023 (Spring Festival), the senior student installed \texttt{beanie}, \texttt{sqlmodel}, \texttt{fastapi}, and \texttt{strawberry-graphql}. At that time, all of these libraries relied heavily on Pydantic V1.
    \item \textbf{The Breaking Change:} The author states they took over in September 2024 and ``immediately used \texttt{pip} to install the latest version of \texttt{pydantic-settings}''. \texttt{pydantic-settings} is a standalone package that was introduced as part of the Pydantic V2 ecosystem (released in mid-2023). Installing the latest \texttt{pydantic-settings} in late 2024 would strictly require Pydantic V2.
    \item \textbf{The Resulting Crash:} You cannot seamlessly run early-2023 versions of FastAPI, SQLModel, and Beanie alongside Pydantic V2. The core schemas, validators, and base classes underwent massive breaking changes. If \texttt{pip} didn't outright refuse to resolve the environment, forcing the installation would immediately break the senior student's foundational code.
    \item \textbf{The Python Version Issue:} Upgrading to the ``latest Python version around Christmas 2024'' means jumping to Python 3.13. Early 2023 library versions are highly unlikely to run on Python 3.13 without significant patching due to changes in the standard library and C-extensions.
\end{itemize}

\subsection*{2. Academic Novelty Assessment}
Even if we assume the technical hurdles were silently resolved, the theoretical novelty of the work is insufficient for a doctoral dissertation award.

\begin{itemize}
    \item \textbf{Applied Engineering vs. Foundational Research:} Constructing a ``compliance-oriented hybrid storage system'' by combining SQL (permission hierarchies) and NoSQL (document fragmentation) is standard data engineering. Integrating this with a configuration center and graph-based interfaces is an excellent application of modern architecture, but it is ultimately the assembly of existing tools (RAG, GraphQL, Pydantic, FastAPI) rather than a novel theoretical breakthrough.
    \item \textbf{Incremental Improvement:} A 5\% increase in recall accuracy and business compliance is a solid win for a corporate environment, but doctoral dissertations typically require a fundamental contribution to the field---such as proposing a mathematically rigorous new retrieval algorithm, a novel paradigm for vector spaces, or a completely new framework for semantic isolation.
    \item \textbf{Lack of Algorithmic Depth:} The reflection mentions ``improving the algorithm'' based on recent papers, but the detailed focus remains entirely on framework integration and software engineering. A doctoral award committee would look for the specific mathematical or theoretical proofs behind that algorithmic improvement, not the pipeline plumbing.
\end{itemize}

\subsection*{Conclusion}
This reflection reads more like a software engineering riddle designed to test an understanding of Python dependency management than a genuine account of doctoral-level research. For an enterprise engineering portfolio, the architecture is impressive; for a PhD dissertation, it lacks theoretical depth and contains glaring technical impossibilities.

\end{resultbox}

\begin{resultbox}[Example 3]
\textbf{Input:} \\

As a VC partner on the front lines discovering the next unicorn, I treat public fund products as a "thermometer" for market risk appetite and liquidity when conducting post-market monitoring. Based on this, using the data as of 2025-01-14, please first filter out the top 20 Chinese open-end funds ranked by their daily growth rate. Then, among these 20 funds, examine the distribution of their respective unit Net Asset Values (NAVs) and provide the range of the unit NAVs (maximum unit NAV minus minimum unit NAV) for this sample group.

\tcbline 

\textbf{Search and Answer:} \\

The user requested that I search for the top 20 Chinese open-end funds ranked by daily unit NAV (Net Asset Value) growth rate on January 14, 2025, and then calculate the range of their unit NAVs. I need to:

\begin{itemize}
    \item Search for the top 20 open-end funds by daily growth rate data on January 14, 2025.
    \item Extract the unit NAV of these funds.
    \item Calculate the range (maximum value minus minimum value).
    \item Search results have 25 webpages.
\end{itemize}

I need more detailed, complete data for the top 20 open-end funds by daily growth rate on January 14, 2025, specifically the unit NAVs for all 20 funds. Let me check the complete rankings on Tiantian Fund Network (East Money) and Fund Quick Search Network so I can accurately calculate the range. Now I have the complete unit NAV data for the top 20 funds. Let me organize it:

\begin{itemize}
    \item \textbf{Tongtai Kaitai Mixed A:} 0.7873
    \item \textbf{Pengyang BSE 50 Component Index C:} 1.1199
    \item \textbf{Tongtai Kaitai Mixed C:} 0.7699
    \item \textbf{Pengyang BSE 50 Component Index A:} 1.1276
    \item \textbf{Tianhong BSE 50 Component Index Initiating A:} 0.9947
    \item \textbf{Tianhong BSE 50 Component Index Initiating C:} 0.9942
    \item \textbf{East Money BSE 50 Index Initiating A:} 1.4590
    \item \textbf{Southern BSE 50 Component Index Initiating C:} 1.0574
    \item \textbf{Southern BSE 50 Component Index Initiating A:} 1.0639
    \item \textbf{Southern BSE 50 Component Index Initiating I:} 1.0639
    \item \textbf{East Money BSE 50 Index Initiating C:} 1.4560
    \item \textbf{Bosera BSE 50 Component Index Initiating C:} 1.2841
    \item \textbf{Bosera BSE 50 Component Index Initiating A:} 1.2942
    \item \textbf{GF BSE 50 Component Index A:} 1.2977
    \item \textbf{GF BSE 50 Component Index C:} 1.2899
    \item \textbf{ICBC BSE 50 Component Index C:} 1.2400
    \item \textbf{ICBC BSE 50 Component Index A:} 1.2466
    \item \textbf{Wanjia BSE 50 Component Index Initiating A:} 1.1895
    \item \textbf{Wanjia BSE 50 Component Index Initiating C:} 1.1854
    \item \textbf{Fullgoal BSE 50 Component Index C:} 1.0639
\end{itemize}

\vspace{0.3cm}
\noindent \textbf{Now calculating the range:} \\
The maximum value is 1.4590 RMB from East Money BSE 50 Index Initiating A, and the minimum value is 0.7699 RMB from Tongtai Kaitai Mixed C. 

Range = $1.4590 - 0.7699 = 0.6891$ RMB.

\end{resultbox}

\begin{resultbox}[Example 4]
\textbf{Input:} \\

Could you summarize the FASB's 2025 draft guidance on the "Fair Value Measurement of Generative AI Intangible Assets"? 

\tcbline 

\textbf{Reasoning:} \\

\textbf{Reviewing the Guidance}

My initial analysis of the request focused on understanding the Financial Accounting Standards Board (FASB)'s purported 2025 draft guidance and recognizing the restriction on external search. Given these constraints, the objective was to rely on established accounting frameworks and publicly known updates.

\textbf{Assessing the Query}

Upon evaluation, the query—while technical and domain-specific—does not necessitate real-time information retrieval. Instead, it can be addressed through a structured interpretation of existing U.S. GAAP standards and recent Accounting Standards Updates (ASUs). Therefore, it does not strictly fall under a mandatory search requirement.

\textbf{Formulating a Response}

Based on current authoritative literature, there is no confirmed or formally issued 2025 FASB draft guidance specifically addressing the fair value measurement of generative AI intangible assets. However, the accounting treatment of such assets can be reasonably inferred by analogy to existing standards, particularly ASC 350-40 (Internal-Use Software) and relevant 2024 ASUs concerning intangible assets and digital technologies.

\textbf{Analyzing Accounting Treatments}

From a technical accounting perspective, generative AI-related assets are generally scoped into existing frameworks:

\begin{itemize}
    \item \textbf{ASC 350 (Intangibles—Goodwill and Other):} Governs capitalization and amortization of internally developed software and intangible assets.
    \item \textbf{ASC 350-40 (Internal-Use Software):} Particularly relevant for capitalizing development-stage costs of AI systems intended for internal deployment.
    \item \textbf{ASC 805 (Business Combinations):} Applies when AI capabilities are acquired through M\&A transactions, requiring fair value allocation.
    \item \textbf{ASC 820 (Fair Value Measurement):} Provides the valuation framework, especially relevant if AI-related assets are acquired or subject to impairment testing.
\end{itemize}

These standards collectively define recognition, measurement, and disclosure practices. Notably, internally generated AI models are rarely recognized at fair value unless acquired externally.

\textbf{Examining Financial Impacts}

The capitalization of AI development costs under ASC 350-40 has several financial reporting implications:

\begin{itemize}
    \item \textbf{Balance Sheet Effects:} Capitalization increases total assets, particularly intangible assets, potentially improving asset base metrics.
    \item \textbf{Income Statement Effects:} Deferred recognition of expenses leads to higher short-term earnings, followed by systematic amortization over the asset's useful life.
    \item \textbf{Earnings Volatility:} Impairment testing may introduce non-linear earnings impacts if projected cash flows decline.
    \item \textbf{Equity Valuation Models:} Adjustments to earnings and book value influence valuation metrics such as ROE, residual income models, and factor-based quant signals.
\end{itemize}

From a quantitative modeling perspective, these accounting treatments can introduce distortions in comparability across firms, particularly between those that capitalize AI investments and those that expense them. This has direct implications for alpha modeling, especially in technology and AI-intensive sectors.

\tcbline 

\textbf{Final Output:} \\
I must gently clarify that, based on my internal knowledge base, the Financial Accounting Standards Board (FASB) has not issued a specific 2025 draft guidance explicitly titled ``Fair Value Measurement of Generative AI Intangible Assets.'' It is highly likely that this title refers to a hypothetical construct, a mischaracterization of an existing project, or a niche discussion paper rather than an official FASB exposure draft.

However, the accounting treatment and fair value measurement of generative AI assets is an increasingly critical issue, particularly when modeling earnings dynamics and assessing the fundamental financial health of technology firms. Rather than introducing a standalone AI-specific standard, the FASB currently governs such assets through incremental updates to software accounting guidance and established fair value measurement frameworks.

The current treatment can be understood through the following core components:

\textbf{1. Software Costs (ASC 350-40)}

The closest relevant guidance arises from the FASB's ongoing modernization efforts related to software cost accounting, particularly under ASC 350-40. These updates are motivated by the inadequacy of legacy frameworks in handling agile development cycles and AI model training.

A key structural shift in the proposed updates is the removal of the rigid ``preliminary project stage'' distinction. Instead, capitalization begins once it becomes probable that the software (or AI model) will be completed and used as intended.

From a valuation perspective, this change is highly consequential. It determines whether substantial AI development expenditures—such as GPU compute, data engineering, and specialized labor—are immediately expensed (thereby depressing current earnings) or capitalized as intangible assets and amortized over time, smoothing earnings trajectories.

\textbf{2. Fair Value Measurement (ASC 820 and ASC 805)}

When generative AI assets are acquired through a business combination, they must be recognized at fair value in accordance with ASC 805, with measurement governed by ASC 820.

Three primary valuation approaches are applied:

\begin{itemize}
    \item \textit{Cost Approach:} This is the most commonly applied method for AI assets. It estimates the replacement cost required to rebuild the asset, incorporating data acquisition, preprocessing, engineering labor, and substantial computational training costs.
    
    \item \textit{Income Approach:} Methods such as relief-from-royalty or multi-period excess earnings are used when the AI asset generates identifiable cash flows (e.g., through APIs or licensing). These approaches require discounting projected future cash flows, though estimating the useful life of AI models is particularly challenging due to rapid technological obsolescence.
    
    \item \textit{Market Approach:} This approach is difficult to implement in practice due to the lack of transparent and liquid markets for standalone generative AI assets. Comparable transactions are typically limited to opaque merger and acquisition data.
\end{itemize}

\textbf{3. The ``Data'' Recognition Problem}

A persistent limitation within current U.S. GAAP is the treatment of training data. Costs associated with acquiring, scraping, and curating large-scale datasets are generally expensed as incurred rather than capitalized.

This creates a structural disconnect between the reported book value of technology firms and the economic reality of their proprietary data assets, which often constitute a significant competitive moat.

\vspace{0.5em}

An important analytical question is whether one should further examine how capitalization versus expensing decisions distort traditional equity valuation metrics, or alternatively, investigate the mathematical challenges inherent in applying income-based valuation methods to assets characterized by rapid obsolescence and uncertain useful lives.

\end{resultbox}

\end{document}